\newcommand{\vect}[1]{\mathbf{#1}}
\newcommand{\ws}{\vect{w}^*}
\newcommand{\w}{\mathbf{w}}
\newcommand{\eF}{\widehat{\vect{F}}}
\DeclareMathOperator*{\argmin}{arg\,min}
\theoremstyle{plain}
\newtheorem{theorem}{Theorem}[section]
\title{CAP:  Correlation-Aware Pruning \\ for Highly-Accurate Sparse Vision Models}
\author{
  Denis Kuznedelev \\
  Skoltech \& Yandex \\
  \texttt{Denis.Kuznedelev@skoltech.ru } \\
  \And
  Eldar Kurtic \\
  IST Austria \\
  \texttt{eldar.kurtic@ist.ac.at} \\
  \And
  Elias Frantar \\
  IST Austria \\
  \texttt{elias.frantar@ist.ac.at} \\
  \And
  Dan Alistarh \\
  { IST Austria \& Neural Magic}\\
  \texttt{dan.alistarh@ist.ac.at}
}}
\begin{document}

\maketitle

\begin{abstract}

Driven by significant improvements in architectural design and training pipelines, computer vision
has recently experienced dramatic progress in terms of accuracy on classic benchmarks such as ImageNet. 
These highly-accurate models are challenging to deploy,  
as they appear harder to compress using standard techniques such as pruning. 
We address this issue by introducing the \textit{Correlation Aware Pruner (CAP)}, 
a new unstructured pruning framework which significantly pushes the compressibility limits for state-of-the-art architectures.
Our method is based on two technical advancements: 
a new \emph{theoretically-justified pruner}, which can handle complex weight correlations accurately and efficiently during the pruning process itself,  
and an \emph{efficient finetuning procedure} for post-compression recovery. 
We validate our approach via extensive experiments on several modern vision models such as Vision Transformers (ViT), 
modern CNNs, and ViT-CNN hybrids, showing for the first time that these can be 
pruned to high sparsity levels (e.g. $\geq 75\%$) with low impact on accuracy ($\leq 1\%$ relative drop). 
Our approach is also compatible with structured pruning and quantization, and can lead to practical speedups of 1.5 to 2.4x without accuracy loss. 
To further showcase CAP's accuracy and scalability, we use it to show for the first time that extremely-accurate large vision models, trained via self-supervised techniques,  
can also be pruned to moderate sparsities, with negligible accuracy loss. 
\end{abstract}

\section{Introduction}

Computer vision has seen impressive progress recently 
via a \emph{new generation of architectures} motivated by the Vision Transformer (ViT) approach~\cite{ViT, DeiT} and its extensions, e.g.~\citep{ali2021xcit, liu2021swin, PVT}, 
accompanied by more advanced data augmentation and training approaches~\cite{DeiT, wightman2021resnet}. 
Next-generation vision models such as ViT~\cite{ViT, DeiT} and ConvNext~\cite{woo2023convnext, liu2022convnet} achieve breakthrough performance across vision tasks, despite encoding fewer inductive biases. 
This comes at the cost of very large computational and parameter budgets, both for training and deployment. 
Thus, there is a clear need to reduce these costs via \emph{compression}, enabling  deployment in resource-constrained settings.

Yet, the general consensus from the literature is that next-generation models are \emph{harder to compress} while preserving accuracy, 
relative to their classic convolutional counterparts~\citep{SVIT,  cp_vit, dynamic_vit, multi_dimensional_compession_of_vit}. 
For example, if current techniques can compress the classic ResNet50 model~\citep{he2016deep} to 80-90\% unstructured sparsity 
with negligible loss of accuracy~\citep{AC-DC, vanderschueren2023straight}, 
the best currently-known results for similarly-accurate ViT models only reach at most 50\% sparsity while maintaining dense accuracy~\citep{SVIT}. 
Our investigation of this phenomenon, detailed in the paper, shows that this occurs for two reasons: 
\begin{enumerate}
    \item \textbf{Hardness of Pruning}: As illustrated in Figure~\ref{fig:one_shot_pruning}, existing pruning approaches, based on magnitude~\cite{han2015deep}, first-order~\cite{sanh2020movement} or second-order information~\cite{singh2020woodfisher} drop very significant accuracy \emph{per pruning step} for next-generation architectures such as ViT and ConvNext. 
    \item \textbf{Expensive Recovery}: At the same time, recovering accuracy via fine-tuning~\cite{gale2019state} is itself computationally-challenging, as next-generation architectures are notoriously hard to train and finetune~\cite{how_to_train_your_vit, DeiT}. 
\end{enumerate}

Therefore, it is natural to ask whether this ``lack of compressibility'' is inherent for next-generation vision models, 
and whether their increased accuracy comes at the price of higher deployment costs. 

\paragraph{Contributions.} In this paper, we resolve this question by proposing a new highly-accurate \emph{correlation-aware pruner (CAP)}, which shows that \emph{high sparsity can be achieved across next-generation architectures} such as ViT~\cite{ViT, DeiT}, ConvNext~\cite{liu2022convnet,woo2023convnext}, and augmented ResNets~\cite{wightman2021resnet},  and that this can be done in a \emph{computationally-efficient manner}. 
Furthermore, CAP is compatible with other forms of compression such as token dropping and quantization, 
and can lead to significant inference speedups, at little or no accuracy loss.

Our primary motivation is to resolve the hardness of pruning modern, highly-accurate vision models. 
A key weakness of existing state-of-the-art approaches~\cite{evci2020rigging, sanh2020movement, SVIT, AC-DC, singh2020woodfisher, M-FAC}, is that they \emph{do not directly take into account weight correlations:}  
At each pruning step, a \emph{saliency score} is computed per weight, e.g., the magnitude, 
  weights are ranked by this score, and a large subset is chosen to be removed and/or re-introduced. 
However, this does not take into account the fact that removed weights \emph{may themselves be correlated}: 
for example, a pair of linearly-dependent rows in a layer's weight matrix may seem \emph{individually easy to remove}, since they are mutually-redundant, 
but removing \emph{both} at a step may lead to a large accuracy drop. 
This phenomenon is especially-relevant for Transformer-based architectures, which encode more complex inter-weight correlations, relative to their convolutional counterparts, in which correlations tend to be localized~\cite{singh2020woodfisher}.   

We circumvent this issue via a novel formulation of the constrained optimization problem corresponding to 
choosing the optimal weights to prune at a step~\cite{hassibi1993optimal}, while taking correlations into account. 
We prove formally that this task can be reduced to finding the set of sparse weights which best preserve the correlation between the dense weights and their gradients, on a given set of training samples. 
This allows us to efficiently solve the ``optimal pruning with correlations'' problem, via a fast approximate solver for the above constrained optimization problem.  

\begin{figure}[!h]
    \center
    \begin{subfigure}{0.95\linewidth}
        \centering
        \includegraphics[width=\linewidth]{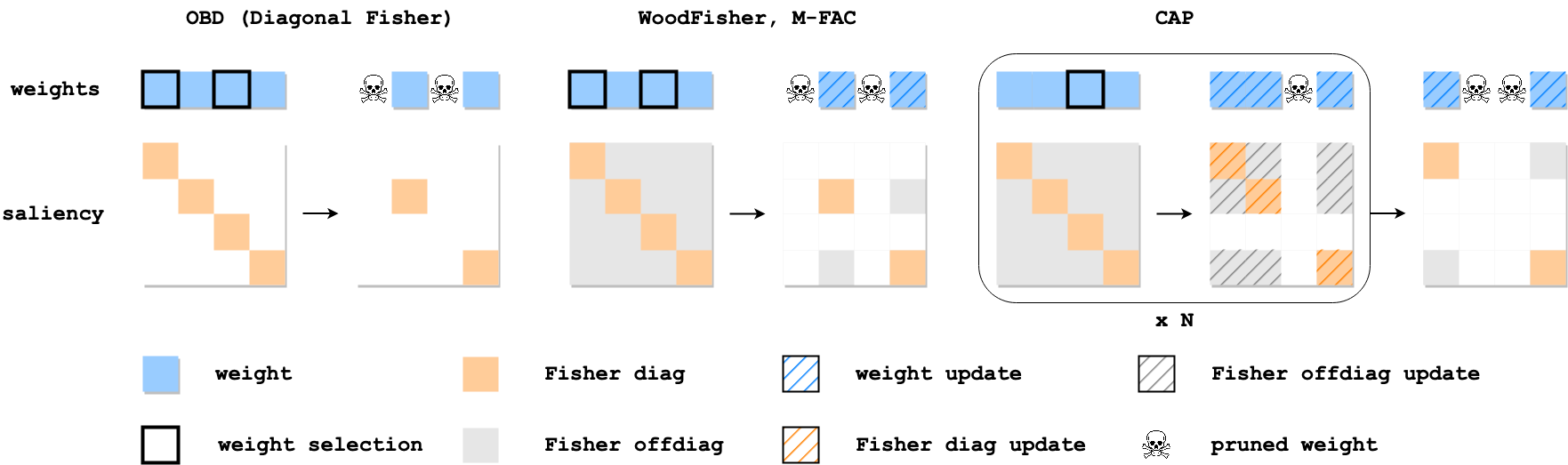}
    \end{subfigure}
    \caption{
        (\textbf{left}): Diagonal approximation doesn't take into account mutual dependencies between weights.
        (\textbf{center}): WoodFisher and M-FAC account for inter-weight dependencies within a block but eliminate all weights
        at once and do not account for the change of weight saliency after elimination part of some weights.
        (\textbf{right}): The proposed CAP approach eliminates correlated weights one-by-one and updates the weight saliency
        and hessian elements after pruning of each weight. 
    }
    \label{fig:correlation_visualization}
\end{figure}

The CAP algorithm outperforms all other known pruners by a considerable margin (see Figure~\ref{fig:one_shot_deit_pruning}). 
In turn, this precision lends greater flexibility in \emph{gradual pruning}, 
and enables us to build a \emph{computationally-efficient} approach to compress next-generation vision models, addressing our second motivating challenge. 
Specifically, our gradual pruning approach provides a simple and general recipe combining data-efficient augmentation and regularization~\cite{DeiT, how_to_train_your_vit} with theoretically-justified learning rate rewinding~\cite{renda2020comparing, kurtic2022optimal}, leading to state-of-the-art sparsity-vs-accuracy trade-offs. 

For instance, experiments on the standard ImageNet-1K benchmark~\citep{ILSVRC15} show 
for the first time that ViT models can attain high sparsity levels without significant accuracy impact: specifically, we can achieve 75-80\% sparsity with relatively minor ($<$ 1\%) accuracy loss, and 90\% sparsity with moderate loss. In turn, this sparsity leads to computational speedups of more than 2x. 
Our approach extends to highly-accurate pruning of large ViTs trained by self-supervised pretraining, but also to other modern models, such as ConvNext~\cite{liu2022convnet,woo2023convnext} or highly-accurate ResNets~\cite{wightman2021resnet}. 

In sum, our results show that next-generation highly-accurate vision architectures are still highly-compressible, but may also require next-generation pruning approaches.  

\paragraph{Related work.} Next-generation vision architectures such as Vision Transformers (ViTs)~\citep{ViT} and ConvNext~\cite{liu2022convnet,woo2023convnext} have set new accuracy benchmarks, but are known to require careful tuning in terms of both augmentation and training hyper-parameters. Identifying efficient recipes is an active research topic in itself~\citep{DeiT, how_to_train_your_vit}. 
We propose simple and general recipes for \emph{fine-tuning} such models, which should be useful to the community. 
Several prior works have investigated ViT compression, but focus on \emph{structured} pruning, such as removing tokens~\citep{VTP, LTP, evo_vit, ia_red, cp_vit, dynamic_vit, multi_dimensional_compession_of_vit}. 
Our experiments show that structured approaches are \emph{orthogonal} to CAP, which can be applied in conjunction to structured compression, to obtain further gains. 

The only existing prior work on \emph{unstructured} ViT pruning is SViTE~\citep{SVIT}, which performed careful customization of the RigL pruning method~\citep{evci2020rigging} to the special case of ViT models. 
We also present results relative to other methods, such as tuned magnitude pruning, the best first-order and second-order pruners~\citep{singh2020woodfisher, sanh2020movement, M-FAC, kurtic2022optimal} and AC/DC pruning~\citep{AC-DC}.  
(These methods are known to outperform all other prior methods.) 
CAP improves upon existing methods across almost all benchmarks, by large margins at high sparsity.  

Research on accurate pruning using  second-order information was initiated by LeCun et al.~\cite{lecun1989optimal}, and has recently garnered significant  attention~\citep{2017-dong, wang2019eigendamage, singh2020woodfisher, yu2022hessian}. 
This approach can lead to good results for both gradual pruning~\citep{singh2020woodfisher} and one-shot (post-training) compression~\citep{frantar-obc}. 
Existing such pruners are not correlation-aware, and are outperformed by CAP across all of our experiments. 

\section{Background and Problem Setup}
\label{sec:background}

The pruning problem assumes a fixed model architecture
with weights $\mathbf{w} \in \mathbb{R}^{d}$ ($d$ is the total number of parameters), and aims to find a configuration of weights with as many zeros as possible while preserving the performance of the original dense model. 
\emph{Gradual} pruning, e.g.~\citep{hoefler2021sparsity}, usually starts from an accurate \emph{dense} model, and progressively removes weights by setting them to zero, followed by fine-tuning phases. 

\noindent \textbf{Weight Saliency.} 
The pruning step usually relies on proxies for weight importance, defined according to certain criteria. For instance, the \emph{weight magnitude} is arguably the most popular criterion, e.g.~\citep{han2015deep, zhu2017prune, gale2019state}. 
Specifically,  given model \( \w \in \mathbb{R}^d \), the saliency of each weight is its absolute value (the magnitude) \( \rho_j = |w_j| \) for \( j \in \{1, 2, \dots, d\} \); weights with the smallest scores are pruned away. 
Gradual magnitude pruning is usually a strong baseline across most models and settings.
Many other criteria exist, such as gradient magnitude~\citep{evci2020rigging} or  ``rates of change'' in the weights~\citep{sanh2020movement}. 

\paragraph{The Optimal Brain Surgeon (OBS).} 
LeCun et al.~\citep{lecun1989optimal} and Hassibi et al.~\citep{hassibi1993optimal} obtained weight saliency scores by leveraging (approximate) second-order information about the loss, starting from the Taylor approximation of the loss $\mathcal{L}$ in the vicinity of the dense  model parameters $\mathbf{w}^{*}$. 
Assuming that $\mathbf{w}^{*}$ is close to the optimum (hence $\nabla \mathcal{L} (\mathbf{w}^{*}) \simeq 0$), 
one seeks a binary mask $\mathbf{M}$ (with elements $\in \{0, 1\}$) and new values for the remaining weights $\mathbf{w}^{M}$, 
such that the resulting increase in loss is minimal. 
A standard approach to approximate the loss increase is to expand the loss function up to the second order in model weights:
\begin{equation}
\label{eq:basic-obs}
\mathcal{L} (\mathbf{w}^{M}) - \mathcal{L} (\mathbf{w}^{*}) \simeq
\frac{1}{2} (\mathbf{w}^{M} - \mathbf{w}^{*})^\top \mathbf{H}_{\mathcal{L}} (\mathbf{w}^{*}) (\mathbf{w}^{M} - \mathbf{w}^{*})
\end{equation}

where $\mathbf{H}_{\mathcal{L}} (\mathbf{w}^{*})$ is the Hessian of the model at $\mathbf{w}^{*}$, and $\mathbf{w}^{M}$ represents weights after the pruning step. 
In this setup, \cite{lecun1989optimal} and \cite{hassibi1993optimal} showed that 
 the ``optimal'' weight to remove, incurring the least loss, and the update to the remaining weights, can be determined via a \emph{closed-form} solution to the above inverse problem. 
 Specifically, 
 the saliency score $\rho_i$ for $i^{\text{th}}$ weight and the optimal weight update $\delta \mathbf{w}$ for the remaining weights after elimination of the $i^{\text{th}}$ weight are as follows: 
\begin{equation}
\label{eq:obs-saliency-update}
\begin{aligned}
\rho_{i} = \frac{w_i^2}{2 [\mathbf{H}_{\mathcal{L}}^{-1} (\mathbf{w}^{*})]_{ii}} & \,\, \textnormal{    and } &
\delta \mathbf{w}^{*} = - \frac{w_i}{[\mathbf{H}^{-1}_{\mathcal{L}}(\mathbf{w}^{*})]_{ii}} \mathbf{H}^{-1}_{\mathcal{L}}(\mathbf{w}^{*}) \mathbf{e}_i,
\end{aligned}
\end{equation}

where $\mathbf{e}_i$ is the $i^{\text{th}}$ basis vector. 
Theoretically, the procedure would have to be executed one-weight-at-a-time, recomputing the Hessian after each step. 
In practice, this procedure suffers from a number of practical constraints. 
The first is that direct Hessian-inverse computation is computationally-infeasible for modern DNNs, due to its quadratic-in-dimension storage and computational costs. This has led to significant recent work on efficient second-order approximations for pruning and quantization~\citep{2017-dong, wang2019eigendamage,  yu2022hessian}.  

\paragraph{WoodFisher and the Optimal BERT Surgeon.} 
The \emph{empirical Fisher} approximation~\citep{amari1998natural} is a classic way 
of side-stepping some of the above constraints, and can be formally-stated as follows:
\begin{equation}
\mathbf{H}_{\mathcal{L}} (\mathbf{w}^*)
\simeq \mathbf{F} (\mathbf{w}^*) = \lambda \mathbf{I}_{d \times d} + \frac{1}{N} \sum_{i=1}^{N} 
\nabla \mathcal{L}_{i} (\mathbf{w}^*) \nabla \mathcal{L}_{i} (\mathbf{w}^{*})^\top
\end{equation}
where $\nabla \mathcal{L}_{i} (\mathbf{w}^*) \in \mathbb{R}^d$ is a gradient computed on a sample of data, 
$\lambda > 0$ is a dampening constant needed for stability, and $N$ is the total number of gradients used for approximation. Note that the resulting matrix is \emph{positive} definite. 

The memory required to store the empirical Fisher matrix is still quadratic in $d$, the number of parameters. 
Singh and Alistarh~\cite{singh2020woodfisher} investigated a diagonal block-wise approximation with a predefined block size $B$, which reduces storage cost from $\mathcal{O}(d^2)$ to $\mathcal{O}(B d)$, and showed that this approach can lead to strong results when pruning CNNs. 
Kurtic et al.~\citep{kurtic2022optimal} proposed a formula for pruning fixed groups/patterns (e.g., 4 consecutive weights), together with a set of non-trivial optimizations to efficiently compute the Fisher block inverses, which allowed them to scale the approach for the first time to large language models.  

A second obvious limitation of the OBS framework is that applying the procedure and recomputing the Hessian one weight at a time is prohibitively expensive, so one usually prunes multiple weights at once. 
Assuming we are searching for the set of weights $Q$ whose removal would lead to minimal loss increase after pruning, we get the following constrained optimization problem:
\begin{equation}
\min_{\delta \mathbf{w}}
\ \frac{1}{2} \delta \mathbf{w}^\top \ \mathbf{F} \ (\mathbf{w}^*) \delta \mathbf{w}
\quad \text{s.t.} \quad  \mathbf{E}_{Q} \delta \mathbf{w} + \mathbf{E}_{Q} \mathbf{w}^* = \mathbf{0},
\end{equation}
where $\mathbf{E}_{Q} \in \mathbb{R}^{|Q|\times d}$ is a matrix of  basis vectors for each weight in $Q$.
The corresponding saliency score for the group 
of weights $Q$ and the update $\delta \mathbf{w^*_Q}$ of remaining weights are~\citep{kurtic2022optimal}:
\begin{equation}
\label{eqn:multi-weight-obs}
\begin{aligned}
\rho_Q &= \frac{1}{2} \mathbf{w}^{* \top}_Q \left( \mathbf{F}^{-1}(\ws)_{[Q,Q]} \right) ^{-1} \mathbf{w}^*_Q 
\textnormal{ and }
\delta \mathbf{w}^*_Q &= -\mathbf{F}^{-1}(\ws)\mathbf{E}_Q^\top \left( \mathbf{F}^{-1}(\ws)_{[Q,Q]} \right) ^{-1} \mathbf{w}^*_Q.
\end{aligned}
\end{equation}

However, an exhaustive search over all subsets of size $|Q|$ from $d$ elements
requires $\binom{d}{|Q|}$ evaluations, which makes it prohibitively expensive for $|Q| > 1$. For unstructured pruning, virtually all known techniques, e.g.~\cite{singh2020woodfisher, M-FAC}, ignore correlations between weights. Similarly, for group pruning, Kurtic et al.~\cite{kurtic2022optimal} ignore correlations between groups.
Despite these approximations, both approaches yield state-of-the-art results in their respective setups. As we will demonstrate later, our CAP method improves upon these approximations by reformulating this problem and proposing a correlation-aware solution that is fast and memory-efficient even for models with $\sim$ 100M parameters.

\section{The CAP Pruning Framework}\label{sec:approach}

We introduce new techniques to address both the hardness of pruning modern vision architectures, and their high computational cost for fine-tuning: we introduce a new state-of-the-art one-shot pruner, which is complemented with a simple and general framework for data-efficient fine-tuning. 

\subsection{Ingredient 1: Efficient Correlation-Aware Pruning}\label{subsec:ovit_pruning}

Our aim is to solve the pruning problem stated in the previous section: given a weight pruning target $k$, find the  optimal set of weights $Q$ to be pruned, such that $|Q|=k$ and the loss increase is minimized.
Exactly solving for the optimal $Q$ is an NP-hard problem~\citep{blumensath2008iterative}, so we will investigate an iterative greedy method for selecting these weights, similar to the ideal version of the OBS framework discussed above. 
Importantly, our method \textit{properly considers weight correlations}, 
which turn out to be important since, as demonstrated in Appendix \ref{app:fisher_matrix_structure}, the empirical Fisher has an apparent non-diagonal structure,
while being \emph{fast and space-efficient}. In turn, this leads to significant improvements over other pruners, especially in the context of vision transformers. 

Formally, a correlation-aware greedy weight selection approach would perform  pruning steps iteratively, as follows. 
Given a set of already-selected weights $Q$, initially $\emptyset$, 
we always add to $Q$ the weight $q$ which has minimal joint saliency $\rho_{Q \cup \{q\}}$, repeating until the size of $Q$ equals the pruning target $k$. The fact that we add weights to the set one-by-one allows us to take into account correlations between pruned weights. 
However, a naive implementation of this scheme, which simply recomputes saliency at each step, would be prohibitively expensive, since it requires 
$O(k d)$ evaluations of $\rho_Q$, each of which involves an $O(B^3)$ matrix inversion, where $B$ is the Fisher block size.

\paragraph{Disentangling Correlations.}  The centerpiece of our approach is a reformulation of the OBS multi-weight pruning problem in Equation~\ref{eqn:multi-weight-obs} which will allow us to take correlations into account, while being practically-efficient. 
Specifically, we now show that, when using the empirical Fisher approximation, 
the problem of finding the optimal set of weights $Q$ to be removed, while taking correlations into account, is equivalent to the problem of finding the set of sparse weights which best preserve the original correlation between the dense weights $\mathbf{w^*}$ and the gradients $\nabla \mathcal{L}_i(\mathbf{w^*})$ on an fixed set of samples $i \in \mathcal{S}$.    
Formally, this result, whose proof we provide in Appendix~\ref{appendix:theorem}, is stated as follows. 

\begin{theorem}
Let $\mathcal{S}$ be a set with $m$ samples, and let $\nabla \mathcal{L}_1 (\mathbf{w^*}), \dots, \nabla \mathcal{L}_m(\mathbf{w^*})$ be a set of their gradients, with corresponding inverse empirical Fisher matrix $\mathbf{F}^{-1}(\ws)$. Assume a sparsification target of $\mathrm{k}$ weights from $\mathbf{w^*}$. 
 Then, a sparse minimizer for the constrained squared error problem
\begin{equation}
    \label{eq:squared-error-main}
    \min_{\mathbf{w'}} \, \frac{1}{2m} \sum_{i = 1}^m \Big(\nabla \mathcal{L}_i(\mathbf{w^*})^\top \mathbf{w'} - \nabla \mathcal{L}_i(\mathbf{w^*})^\top \mathbf{w^*}\Big)^2 \,\, \\ 
\end{equation}
\vspace{-0.2em}
s.t. $\mathbf{w'}$ has at least $k$ zeros, is also a solution   to the problem of minimizing the Fisher-based group-OBS metric
\begin{equation}
    \label{eq:group-fisher-theorem-main}
    \argmin_{Q, |Q| = k} \,\, \frac{1}{2} \mathbf{w}^{* \top}_Q \Big(\mathbf{F}^{-1}(\ws)_{[Q,Q]}\Big)^{-1} \mathbf{w}^*_Q.
\end{equation}

\end{theorem}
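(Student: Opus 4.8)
The plan is to show that, after the change of variables $\dw = \mathbf{w'} - \ws$, the squared-error problem in Equation~\ref{eq:squared-error-main} and the group-OBS problem in Equation~\ref{eq:group-fisher-theorem-main} reduce to the \emph{same} equality-constrained quadratic program for each fixed choice of pruned coordinates, so that their optimizers coincide. First I would expand $\nabla \mathcal{L}_i(\ws)^\top \mathbf{w'} - \nabla \mathcal{L}_i(\ws)^\top \ws = \nabla \mathcal{L}_i(\ws)^\top \dw$, which turns the objective of Equation~\ref{eq:squared-error-main} into $\frac{1}{2m}\sum_{i=1}^m \big(\nabla \mathcal{L}_i(\ws)^\top \dw\big)^2 = \frac{1}{2}\dw^\top \big(\frac{1}{m}\sum_{i=1}^m \nabla \mathcal{L}_i(\ws)\nabla \mathcal{L}_i(\ws)^\top\big)\dw$. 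The matrix in parentheses is exactly the (undampened) empirical Fisher, so the squared error around $\ws$ is the Fisher-weighted quadratic form $\frac{1}{2}\dw^\top \mathbf{F}(\ws)\dw$ that underlies the OBS metric; this is the central identity on which everything rests.

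Next I would translate the sparsity constraint. Requiring $\mathbf{w'}$ to have at least $k$ zeros amounts to choosing a support set $Q$ of zeroed coordinates with $|Q| \ge k$ and enforcing $\mathbf{w'}_Q = \mathbf{0}$, i.e. $\mathbf{E}_Q \dw = -\mathbf{E}_Q \ws$, which is precisely the linear constraint appearing in the multi-weight OBS formulation. I would thus rewrite Equation~\ref{eq:squared-error-main} as a nested minimization: an outer search over the zero set $Q$ and, for each $Q$, an inner minimization of $\frac{1}{2}\dw^\top \mathbf{F}(\ws)\dw$ subject to $\mathbf{E}_Q \dw = -\mathbf{E}_Q \ws$.

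For a fixed $Q$ the inner problem is a standard equality-constrained quadratic program, which I would solve in closed form with Lagrange multipliers $\mathbf{u}$. Stationarity gives $\dw = -\mathbf{F}^{-1}(\ws)\mathbf{E}_Q^\top \mathbf{u}$; substituting into the constraint and using $\mathbf{E}_Q \mathbf{F}^{-1}(\ws)\mathbf{E}_Q^\top = \mathbf{F}^{-1}(\ws)_{[Q,Q]}$ yields $\mathbf{u} = \big(\mathbf{F}^{-1}(\ws)_{[Q,Q]}\big)^{-1}\ws_Q$, recovering both the optimal update and the optimal value $\frac{1}{2}(\ws_Q)^\top\big(\mathbf{F}^{-1}(\ws)_{[Q,Q]}\big)^{-1}\ws_Q$ of Equation~\ref{eqn:multi-weight-obs}. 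Hence the inner optimum of the squared-error problem, viewed as a function of $Q$, is exactly the group-OBS saliency minimized in Equation~\ref{eq:group-fisher-theorem-main}, so the two outer minimizations over $Q$ select the same support. Finally, since each additional enforced zero adds a constraint and can only raise the quadratic minimum, the optimizer over ``at least $k$ zeros'' attains exactly $k$ zeros, matching the $|Q| = k$ requirement.

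The main obstacle I anticipate is reconciling the dampening term. The raw squared-error objective produces the \emph{undampened} matrix $\frac{1}{m}\sum_i \nabla \mathcal{L}_i(\ws)\nabla \mathcal{L}_i(\ws)^\top$, which has rank at most $m$ and is singular when $m < d$, whereas the OBS formula requires the invertible $\mathbf{F}(\ws) = \lambda \mathbf{I} + \frac{1}{m}\sum_i \nabla \mathcal{L}_i(\ws)\nabla \mathcal{L}_i(\ws)^\top$ for the restricted blocks $\mathbf{F}^{-1}(\ws)_{[Q,Q]}$ to be well-defined. I would close this gap by folding the $\lambda \mathbf{I}$ term into the objective as a ridge penalty $\frac{\lambda}{2}\|\dw\|^2$ (equivalently, augmenting the sample set with virtual gradients that contribute $\lambda \mathbf{I}$), so that both problems use the identical positive-definite matrix $\mathbf{F}(\ws)$; this simultaneously guarantees invertibility and makes the closed-form inner solution above well-posed.
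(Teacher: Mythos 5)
Your proof is correct and follows essentially the same route as the paper's: both reduce the squared-error objective to the Fisher-weighted quadratic form $\tfrac{1}{2}\,\dw^\top \mathbf{F}(\ws)\,\dw$ around $\ws$ (you by direct expansion, the paper by an exact second-order Taylor expansion of the quadratic), and then identify the constrained minimization with the standard group-OBS problem. You go beyond the paper in two useful ways: you actually carry out the Lagrange-multiplier derivation of the closed-form saliency $\tfrac{1}{2}\,\mathbf{w}^{*\top}_Q \big(\mathbf{F}^{-1}(\ws)_{[Q,Q]}\big)^{-1}\mathbf{w}^*_Q$, which the paper only cites from prior work, and you correctly flag that the raw squared error yields the \emph{undampened} (possibly singular) Gram matrix rather than $\mathbf{F}(\ws) = \lambda\mathbf{I} + \tfrac{1}{m}\sum_i \nabla\mathcal{L}_i(\ws)\nabla\mathcal{L}_i(\ws)^\top$ --- a mismatch the paper's proof silently elides and which your ridge-penalty fix resolves cleanly.
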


\paragraph{A Fast Solver.} 
The formulation in Equation~(\ref{eq:squared-error-main}) reduces pruning to a sparse regression problem, where the ``input'' is given by \emph{gradients} over calibration samples. 
A related problem arises in the context of one-shot (post-training) pruning~\citep{hubara2021accelerated, frantar-obc}, where the authors solve a sparse $\ell_2$-fitting problem, but sparse weights are determined relative to the \emph{layer inputs} rather than the \emph{layer gradients}. Specifically, the OBC solver~\citep{frantar-obc} utilizes that, due to the quadratic loss, the per-row Hessians are independent of both the weights and each other. Thus the solver processes matrix rows sequentially, greedily eliminating weights, one-by-one, in increasing order of squared error while always updating all remaining weights to compensate for weight removal as much as possible. This essentially implements the OBS selection and update in Equation~\ref{eq:obs-saliency-update} \emph{exactly}, assuming layer-wise $\ell_2$ loss. We extend this strategy to efficiently implement our new Fisher-based greedy weight-subset selection.

A direct application of the OBC approach to remove $\Theta(d)$ weights would have $O(d^3)$ runtime, where $d$ is the layer dimension, as the $\Theta(d^2)$-time selection + update process is repeated $\Theta(d)$ times. By utilizing the block-diagonal structure of the Fisher matrix with block size $B$ during the update after each weight elimination, this complexity can be reduced to $O(d \cdot \text{max}(d, B^2))$, which is however still too slow as $B$ is typically much smaller than $d$. Instead, we proceed differently: we apply the sparse regression solver to each individual Fisher block separately and globally merge results only in the very end. This allows us to efficiently simulate global weight saliency calculation and  weight updates following Equations~\ref{eqn:multi-weight-obs} and~\ref{eq:group-fisher-theorem-main}, in a block-Fisher formulation. The full algorithm requires $O(d \cdot B^2)$ runtime and $O(d \cdot B)$ space and is detailed in Appendix~\ref{alg:ovit}----an efficient implementation is also provided in the Supplementary.

The key difference of our approach relative to WoodFisher is that updates are continuously executed during subset-selection and we are thus explicitly considering the correlations captured by the off-diagonal Fisher elements. When working with small block sizes, our method is very fast and has practically no overhead over existing Fisher-based OBS approaches, while yielding significantly improved one-shot pruning results (see e.g. Figure~\ref{fig:one_shot_pruning} and the accompanying discussion). 

\subsection{Ingredient 2: Fine-tuning and Pruning Procedure}

In order to achieve the best performance, modern training procedures involve longer training schedules together with a careful choice of  hyperparameters  (learning rate schedule, regularization, augmentation), since these are known to  have a major impact on convergence and accuracy~\citep{DeiT, how_to_train_your_vit, wightman2021resnet}. We found the same to apply to post-pruning accuracy recovery, which is key in gradual pruning; below, we describe the main ingredients to obtaining highly-accurate fine-tuning schedules as part of our method.

\paragraph{Learning Rate Schedule.} 
First, to achieve good performance during gradual pruning, the learning rate (LR) schedule is crucial. 
Specifically, we propose to use a \emph{cyclic linear} schedule:
\vspace{-1em}

\begin{equation}
\eta(t) = \eta_{\text{max}} - (\eta_{\text{max}} - \eta_{\text{min}}) \frac{t \% T}{T},
\end{equation}

where $\% x$ means taking the remainder after integer division by $x$. 
We chose a linear decay for simplicity; we obtained similar results for other functions (e.g., cubic decay). 
By contrast, as we illustrate in Figure~\ref{fig:lr_sched_and_acc_drop}, the \emph{cyclic} nature of the schedule is key for accurate pruning, as well as for efficient sparsity sweeps (see below).

\paragraph{Theoretical Justification.} 
Specifically, this choice is justified theoretically by tying back to the original assumptions of the OBS framework: 
for Equation~\ref{eq:basic-obs} to hold, the pruned model should be well-optimized (i.e. have small gradients) at the point when pruning is performed. 
Moreover, right after the pruning step, having a larger value of the learning rate is useful since it gives the model a chance to recover from the sub-optimal point induced via pruning. We note that this learning rate schedule is different from prior work on pruning, which typically uses a single decay cycle~\citep{kusupati2020soft, singh2020woodfisher, AC-DC}, or dynamic learning rate rewinding, e.g.~\citep{frankle2019stabilizing, renda2020comparing}. 

\begin{figure}[!h]
    \begin{subfigure}{0.6\linewidth}
        \centering
        \includegraphics[width=\linewidth]{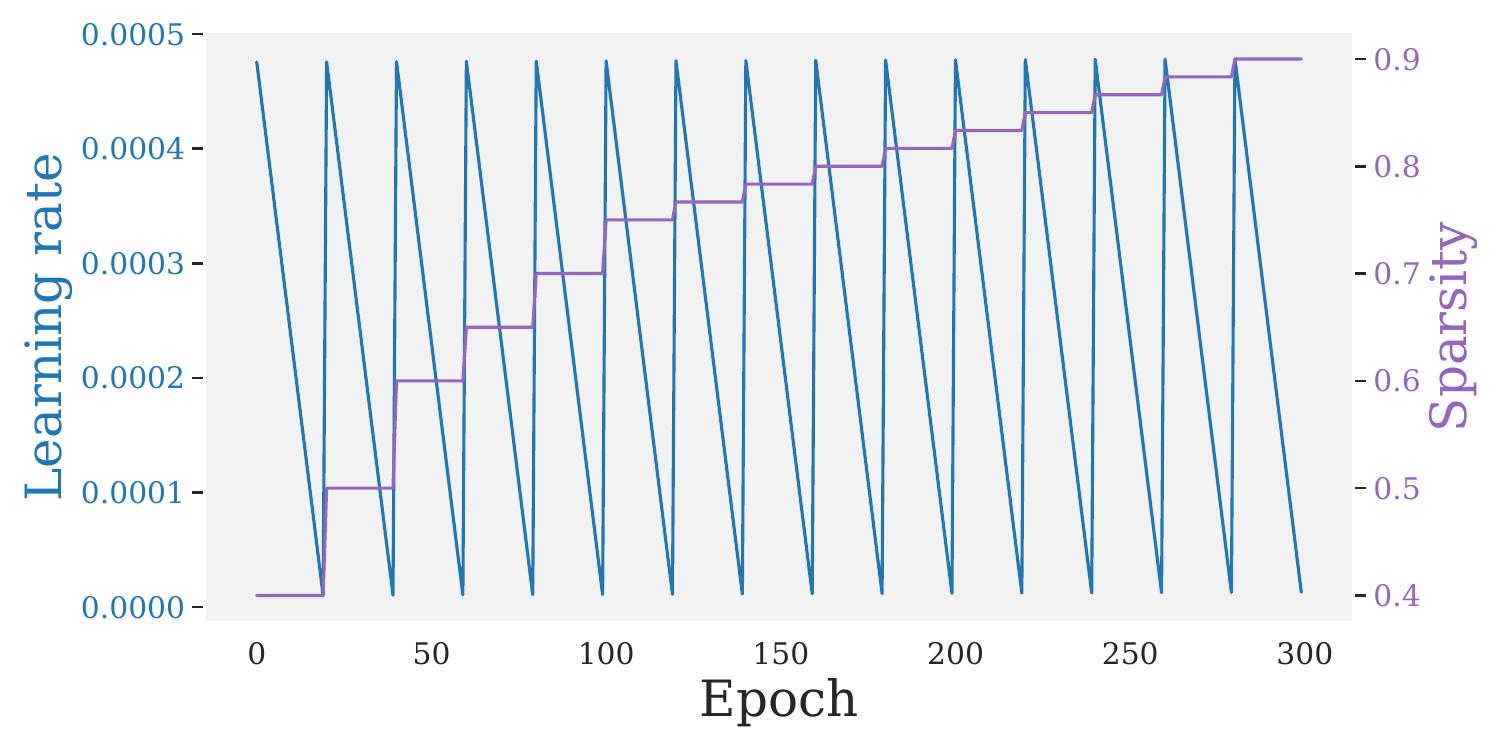}
    \end{subfigure}
    \begin{subfigure}{0.39\linewidth}
        \centering
        \includegraphics[width=\linewidth]{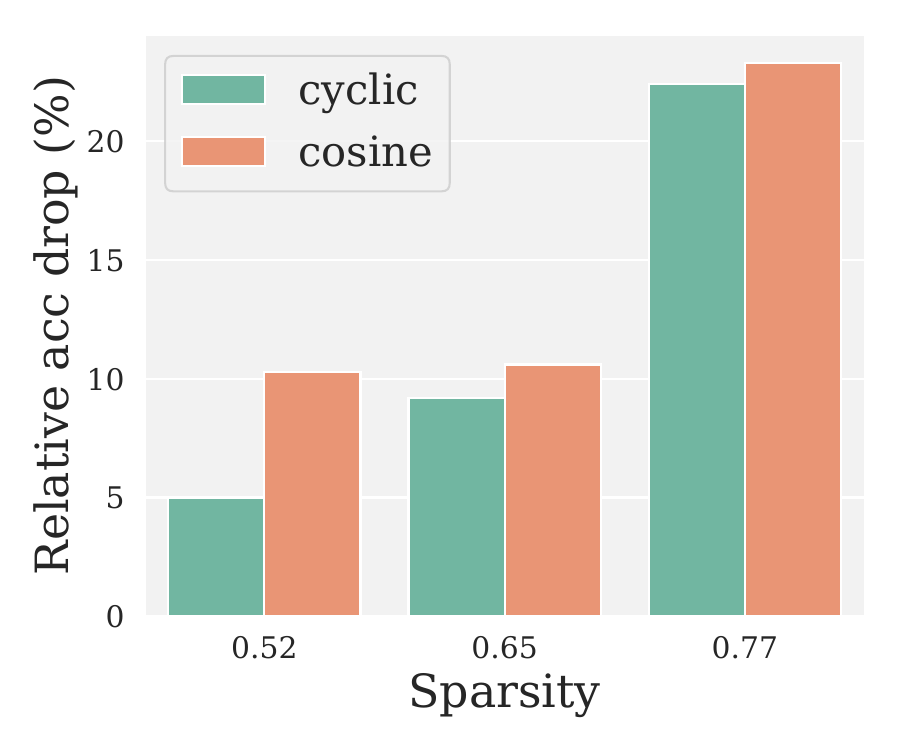} 
    \end{subfigure}
    \vspace{-1em}
    \caption{
        (\textbf{left}):
        {\color{cyan} Blue}: Cyclic linear learning rate schedule used in the work.  
        {\color{violet} Violet}: Dependence of the global model sparsity on the epoch. 
        Every change in sparsity corresponds to a pruning step.
        (\textbf{right}):
        Relative accuracy drop (i.e difference between validation accuracy before and after pruning update) 
        for training with \emph{cyclic} and \emph{cosine} schedule, respectively.
    }
    \label{fig:lr_sched_and_acc_drop}
\end{figure}

\paragraph{Regularization and Augmentation.} 
For augmentation, similarly to~\cite{SVIT}, we adopt known best-practices from the literature: 
smaller models such as DeiT-Tiny benefit from \emph{lower} levels of data augmentation during fine-tuning  as in~\cite{how_to_train_your_vit}, whereas larger models such as DeiT-Base behave best with more complex augmentation and regularization~\cite{DeiT}.
This is intuitive, since fine-tuning sparsified small models with high augmentation may exceed model capacity, 
rendering the optimization process unstable. 
We provide detailed parameter values and ablations for this training component in Appendix \ref{app:training_details}. 

\paragraph{Augmentation for the Empirical Fisher.} 
The choice of augmentation is of great importance not only for the model training but for the accurate estimate
of the Empirical Fisher as well. We observed that without proper augmentation the sparse solution obtained 
overfits to the training data even when finetuned with the same augmentation and regularization
setup. We provide details in Appendix \ref{app:augmentation_for_fisher}.  

\paragraph{Efficient Sparsity Sweeps.} 
We propose a simple iterative pruning framework, which takes a set of target sparsity configurations and produces models which match these configurations \emph{in a single run}. 
Specifically, we start from a standard gradual pruning setup, which prunes in a sequence of steps of increasing sparsity, followed by sparse fine-tuning. We then set the intermediate values in such a way that all intermediate target sparsity levels are achieved. 
For example, if one wishes to obtain checkpoints with sparsity levels $40\%, 50\%, 75\%, 90\%$, one can set the lowest sparsity level on the gradual pruning schedule to
$40\%$, the highest sparsity level to $90\%$, and $50\%, 75\%$ as intermediate points. Between any two such pruning steps, we apply the cyclic retraining schedule above, which ensures that all intermediate points are sufficiently optimized. 

We emphasize the fact that \emph{the accurate CAP pruner is key} to support this efficient pruning approach: 
virtually all previous high-accuracy pruning methods in this setting, e.g.~\citep{kusupati2020soft, singh2020woodfisher, SVIT} redo the \emph{entire training run} for each sparsity target.  
In our experimental section, we also examine the impact of additional fine-tuning applied to each checkpoint, and show that it induces small-but-consistent improvements. 

\section{Experimental Setup and Results}

\paragraph{Setup and Goals.}
We consider the  ImageNet~\citep{ILSVRC15} image classification benchmark, and aim to examine how sparsity impacts accuracy for different model variants. 
We consider three scenarios: \emph{one-shot, single-step pruning} of a pretrained model, where performance is clearly tied to the quality of the second-order approximation,  \emph{one-shot + fine-tuning}, in which we follow one-shot pruning by a short period of fine-tuning, and, finally, \emph{iterative gradual pruning}, where one applies pruning periodically, with some retraining interval, gradually increasing sparsity. 

\subsection{One-shot pruning, without and with fine-tuning} 

We start by examining the quality of existing one-shot pruners relative to CAP.
We compare against carefully-tuned variants of Magnitude Pruning (Magn), First-Order Gradient times Weight (GrW)~\cite{sanh2020movement}, 
and the SOTA second-order methods WoodFisher \citep{singh2020woodfisher, kurtic2022optimal} and M-FAC~\citep{M-FAC}. 
Our tuning process, optimal hyper-parameter choices, and ablations are detailed in Appendices \ref{app:ablations} and \ref{app:other_method_hyperparams}. 
WF-1 represents the \emph{diagonal approximation} of the Hessian proposed by the original OBD~\cite{lecun1989optimal}, scaled via the WoodFisher implementation. 
For Magnitude and CAP we present results for both \emph{uniform} layer-wise sparsity 
and \emph{global} sparsity. 
For all other methods, we present results for global sparsity, which yields better results, as the methods can adjust sparsity globally. 
We only investigate global sparsity for the other methods in Figure \ref{fig:one_shot_pruning}.

\begin{center}
    \begin{minipage}{0.5\linewidth}
        \includegraphics[width=\linewidth]{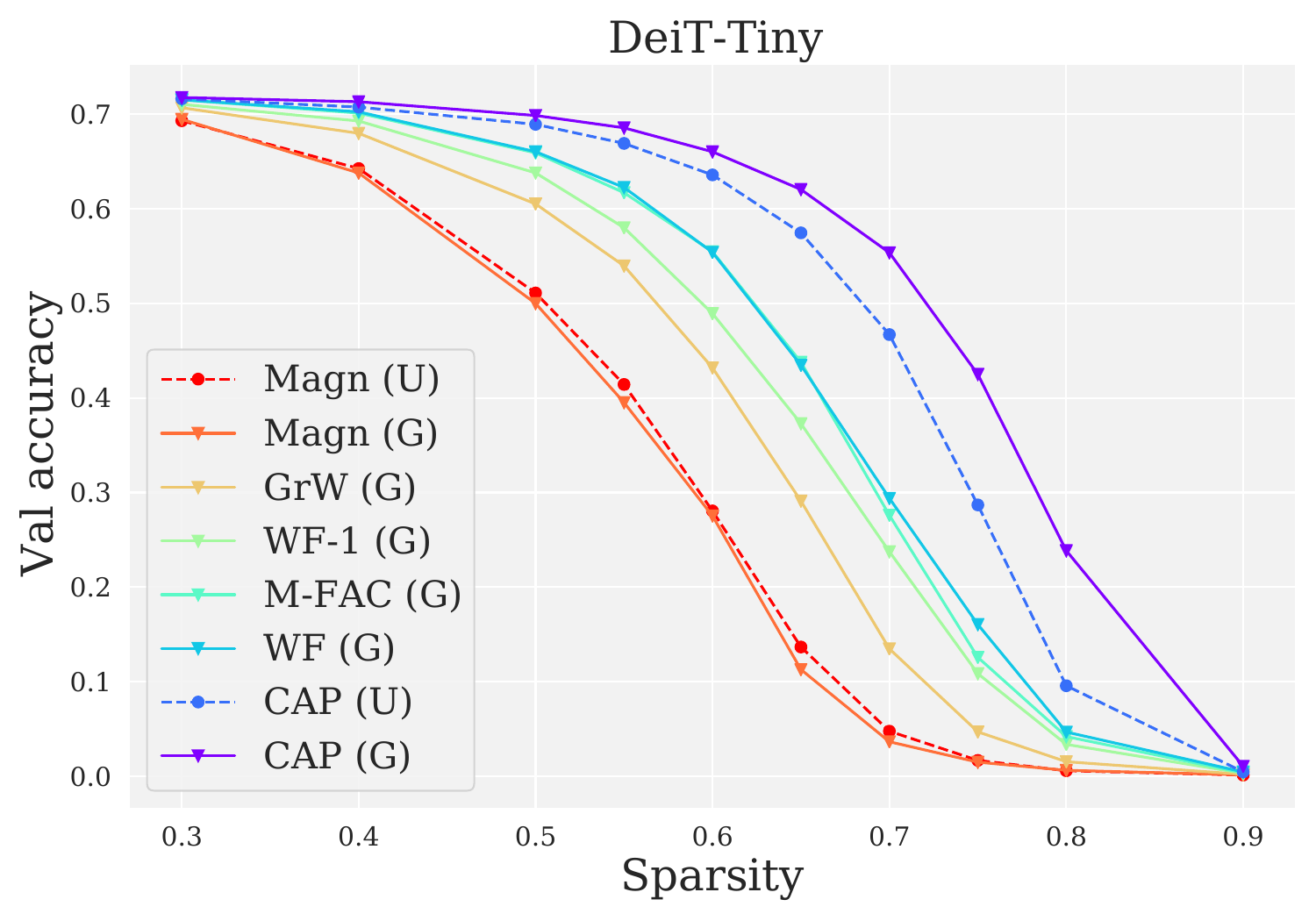}
        \captionof{figure}{One-shot pruning for DeiT-Tiny.}
        \label{fig:one_shot_pruning}
    \end{minipage}
    \begin{minipage}{0.41\linewidth}
        \includegraphics[width=0.99\linewidth]{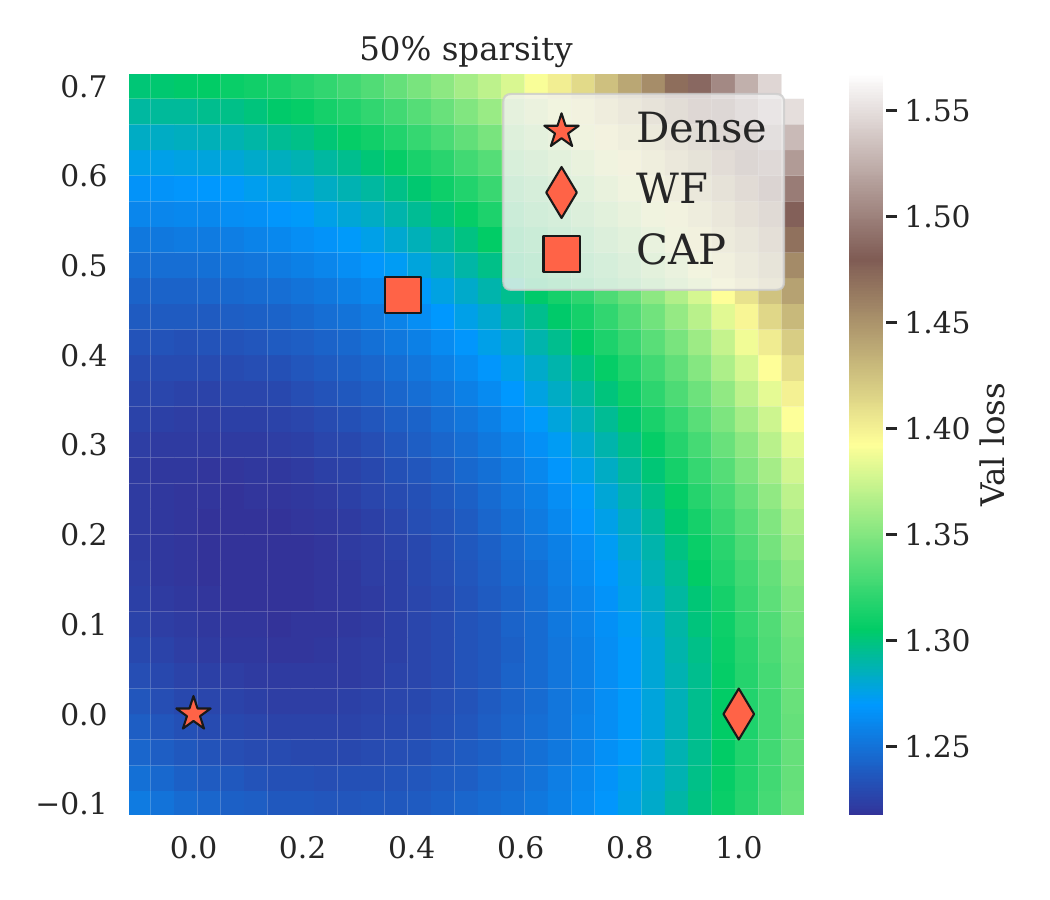}
        \captionof{figure}{
            Validation loss surface. 
        }
        \label{fig:validation_loss_surface}
    \end{minipage}
\end{center}

The full comparison is presented in Figure \ref{fig:one_shot_pruning} for DeiT-Tiny.
Notice that all first and second-order  methods outperform Magnitude, 
and that all 2nd-order methods are better than the 1st order saliency. 
WoodFisher with small block size is better than both the diagonal approximation and large-block M-FAC.
This suggests that it is beneficial to take weight correlations into account, but attempting to incorporate dependencies between large groups 
of weights may lead to noisy estimates detrimental to performance. We also present a comparison
between different pruners on other models (ViT and ResNet) in Appendix \ref{app:deit_one_shot_pruning}.

{We also note that CAP outperforms all other methods, by a large margin.} Remarkably, the gap is so large that \emph{CAP with uniform sparsity} still outperforms \emph{global sparsity} WoodFisher, 
which can re-distribute sparsity globally across layers. 
The computational cost of WF is approximately the same as for CAP:  CAP pruning step on DeiT-Small takes 23 minutes, compared to 20 minutes for WF. (The majority of the cost for both methods comes from the collection of gradients, not Hessian estimation.) 

To investigate the performance difference between CAP and WF (the second best-performing method) in more detail, 
we pruned several variants of ViT models to 50\% and compared the mask overlap 
between WF and CAP via their IoU (intersection over union). 
We observe that the sparsification masks differ significantly: 
the IoU between the WF and CAP mask is 82.7\%, 80.5\%, 73.3\% for 
DeiT-Tiny, DeiT-Small, and DeiT-Base, respectively, 
suggesting that the weight correlations taken into account by CAP lead to significantly different pruning decisions. 
The same trend is visible in the validation loss surface  \footnote{
    We adopt the approach from \cite{paul2023unmasking} for loss surface visualization. 
    Specifically, unit vector in the horizontal direction is chosen to be $w_{WF} - w^{*}$ and 
    the vertical direction is defined by the component of $w_{CAP} - w^{*}$ orthogonal to the horizontal axis.
    Above $w^{*}, w_{WF}, w_{CAP}$ are the weights of the dense model and sparse solutions found by WF and CAP, respectively.
}, projected on the plane in weight space, 
for the dense model and the 50\% sparse models via WF and CAP. 
Figure~\ref{fig:validation_loss_surface} shows that CAP chooses a significantly less steep direction in the loss basin compared to WF.

\paragraph{Compressing highly-accurate models.}
Modern pretraining methods~\citep{chen2020simple, caron2021emerging, radford2021learning, yu2022coca} 
in conjunction with large vision backbones achieve extremely high accuracy on standard benchmarks such as ImageNet-1K.
We leverage the scalability and accuracy of CAP to investigate, for the first time, sparsity in such highly-accurate models, in the absence of finetuning. 
Specifically, we start from the ConvNext-Large checkpoint pretrained via CLIP on the LAION-2B dataset and finetuned sequentially on ImageNet-21k and
ImageNet-1k~\cite{rw2019timm}. 

\begin{table}[!ht]
  \vspace{-10pt}
  \caption{Accuracies of CLIP-pretrained ConvNext-L on ImageNet-1k, following one-shot pruning.}
  \vspace{5pt}
  \label{tab:clip_vit_pruning}
  \centering
  \scriptsize
  \begin{tabular}{cccc}
  \toprule
  Model & Method & Sparsity (\%) & Top1-Accuracy (\%) \\
  \midrule 
  \multirow{13}{*}{ConvNext-L} & Dense & 0 & 87.8 \\
  \cmidrule{2-4}
  & GM & \multirow{3}{*}{50} & 80.2 \\
  & WF & & 86.6 \\
  & CAP & & \textbf{87.5} \\
  \cmidrule{2-4}
  & GM & \multirow{3}{*}{60} & 38.7 \\
  & WF & & 85.1 \\
  & CAP & & \textbf{87.1} \\
  \cmidrule{2-4}
  & GM & \multirow{3}{*}{70} & 0.5 \\
  & WF & & 73.7 \\
  & CAP & & \textbf{86.8} \\
  \bottomrule
  \end{tabular}
\end{table}

In Table \ref{tab:clip_vit_pruning}, we compare our method with WoodFisher (WF) and Global Magnitude (GM) for one-shot pruning via several sparsity targets. 
Results show that 1) CAP can induce 50\% sparsity with relatively low (0.3\%) accuracy loss, and 60\% with less than 1\% accuracy drop; 
2) CAP significantly outperforms other methods. 

\paragraph{One-shot + finetuning.} In most of the practical setups one cannot achieve both high compression rate and maintain performance
of the dense model in one-shot setup. The full retraining procedure allows to achieve high sparsity but is rather expensive 
in the terms of compute. One can be interested to have something in between - moderately sparse model, 
close in performance to the original model but without the need of long and expensive training. 

In the experiments below we prune all models to $50\%$ sparsity, and fine-tune for 20 epochs.
In addition to ViT/DeiT, we also consider similar models based on variants of self-attention~\citep{liu2021swin, ali2021xcit}, and compare against a GM baseline. 
We use a linearly-decaying learning rate schedule between 
$\eta_{max}=10^{-4}$ to $\eta_{max}=10^{-5}$ and the DeiT training recipe~\citep{DeiT}. 
The results are given in Table~\ref{tab:vit_one_shot+finetune pruning}, and show that CAP can almost fully-recover accuracy in this setup for all models; the gaps from GM and WF (see DeiT-Small 75 and 90\%) are still very significant.  

\captionof{table}{One-shot + fine-tuning on ImageNet-1k.}
\begin{minipage}{0.48\linewidth}
  \label{tab:vit_one_shot+finetune pruning}
  \centering
  \tiny
  \begin{tabular}{cccc}
  \toprule
  Model & Method & Sparsity (\%) & Top1-Accuracy (\%) \\
  \midrule 
  \multirow{10}{*}{DeiT-Small} & Dense & 0 & 79.8 \\
  \cmidrule{2-4}
  & GM & \multirow{2}{*}{50} & 79.0 \\
  & CAP & & \textbf{79.5} \\
  \cmidrule{2-4}
  & GM & \multirow{3}{*}{75} & 74.3 \\
  & WF & & 75.8 \\
  & CAP & & \textbf{76.9} \\
  \cmidrule{2-4}
  & GM & \multirow{3}{*}{90} & 45.6 \\
  & WF & & 59.3 \\
  & CAP & & \textbf{65.1} \\
  \midrule
  \multirow{10}{*}{DeiT-Base} & Dense & 0 & 81.8 \\
  \cmidrule{2-4}
  & GM & \multirow{2}{*}{50} & 81.5 \\
  & CAP & & \textbf{81.6} \\
  \cmidrule{2-4}
  & GM & \multirow{3}{1em}{75} & 80.1 \\
  & WF & & 80.2 \\
  & CAP & & \textbf{81.0} \\
  \cmidrule{2-4}
  & GM & \multirow{3}{1em}{90} & 68.1 \\
  & WF & & 69.2 \\
  & CAP & & \textbf{76.3} \\
  \bottomrule
  \end{tabular}
\end{minipage}
\hfill
\begin{minipage}{0.48\linewidth}
  \centering
  \tiny
  \begin{tabular}{cccc}
  \toprule
  Model & Method & Sparsity (\%) & Top1-Accuracy (\%) \\
  \midrule 
  \multirow{12}{*}{ConvNext-Small} & Dense & 0 & 83.1 \\
  \cmidrule{2-4}
  & GM & \multirow{3}{*}{50} & 82.5 \\
  & WF & & 82.5 \\
  & CAP & & \textbf{82.8} \\
  \cmidrule{2-4}
  & GM & \multirow{3}{1em}{75} & 80.7 \\
  & WF & & 81.0 \\
  & CAP & & \textbf{81.9} \\
  \cmidrule{2-4}
  & GM & \multirow{3}{1em}{90} & 70.9 \\
  & WF & & 73.2 \\
  & CAP & & \textbf{78.2} \\
  \midrule
  \multirow{3}{*}{XCiT-Small} & Dense & 0 & 82.0 \\
  \cmidrule{2-4}
  & GM & \multirow{2}{*}{50} & 81.7 \\
  & CAP & & \textbf{81.9} \\
  \midrule
  \multirow{3}{*}[-0.4em]{Swin-Tiny} & Dense & 0 & 81.3 \\
  \cmidrule{2-4}
  & GM & \multirow{2}{*}{50} & 80.6 \\
  & CAP & & \textbf{80.9} \\
  \bottomrule
  \end{tabular}
\end{minipage}

\subsection{Gradual Pruning Results}
\label{sec:gradual_pruning}

Finally, we execute gradual pruning with the sparsity schedule, augmentation choices, and cyclic linear learning-rate scheduler discussed above. The whole gradual pruning procedure lasts for 300 epochs, as in~\cite{DeiT}. 
We aim to obtain accurate sparse checkpoints for $50\%$, $60\%$, $75\%$, $80\%$, and $90\%$ sparsity. 
For this, we prune to $40\%$ in the initial step, and increment sparsity every 20 epochs, until reaching $90\%$, with fine-tuning in between. (See Appendix~\ref{app:ablations} for full results and ablations.) We select the accuracy of intermediate models which match the target sparsities; to examine the impact of fine-tuning, we trained each of the resulting sparse checkpoints for an additional 100 epochs, marked with ($\uparrow$).  
We compare with global magnitude (GM) following the same schedule as CAP, as well as the state-of-the-art SViTE~\citep{SVIT} paper, which trains the sparse model from scratch using a variant of RigL~\citep{evci2020rigging}, but for a total of 600 epochs. 
The results are in Table~\ref{tab:vit_gradual_pruning_main}. 

\vspace{0.5em}
\captionof{table}{Results for gradual pruning of DeiT-Tiny and Small models on ImageNet-1k. CAP achieves 50\% sparsity without accuracy loss, and 80\% sparsity with less than 1\% relative error.}
\begin{minipage}{0.48\linewidth}
  \centering
  \tiny
  \label{tab:vit_gradual_pruning_main}
  \begin{tabular}{ccccc}
  \toprule 
  Model & Method & \makecell{Sparsity\\(\%)} & \makecell{FLOP\\Reduction\\(\%)} & \makecell{Top1\\Accuracy\\(\%)} \\
  \midrule 
  \multirow{14}{*}{DeiT-Tiny} & Dense & 0 & 0 & 72.2 \\
  \cmidrule{2-5}
  & GM & \multirow{3}{*}{50} & 43.9 & 73.5 \\
  & CAP & & 43.9 & \textbf{73.7} \\
  & SViTE-Tiny & & 43.9 & 69.6 \\
  \cmidrule{2-5}
  & GM & \multirow{2}{*}{60} & 52.6 & 73.1 (73.2 $\uparrow$) \\
  & CAP & & 52.6 & 73.3 (\textbf{73.6} $\uparrow$) \\
   \cmidrule{2-5}
  & GM & \multirow{3}{*}{75} & 65.8 & 71.4  (71.9 $\uparrow$)  \\
  & CAP & & 65.8 & \textbf{72.3} (\textbf{72.6} $\uparrow$) \\
  & SViTE-Tiny & & 65.8 & 63.9 \\
  \cmidrule{2-5}
  & GM & \multirow{2}{*}{80} & 69.7 & 70.5 (70.9 $\uparrow$) \\
  & CAP & & 70.2 & 71.7 (\textbf{72.0} $\uparrow$) \\
  \cmidrule{2-5}
  & GM & \multirow{3}{*}{90} & 79.0 & 66.2 (66.6 $\uparrow$) \\
  & CAP & & 79.0 & 67.4 (\textbf{68.0} $\uparrow$) \\
  & SViTE-Tiny & & 79.0 & 49.7 \\
  \bottomrule
  \end{tabular}
\end{minipage}
\hfill
\begin{minipage}{0.48\linewidth}
  \vspace{-0.1em}
  \centering
  \tiny
  \begin{tabular}{ccccc}
  \toprule 
  Model & Method & \makecell{Sparsity\\(\%)} & \makecell{FLOP\\Reduction\\(\%)} & \makecell{Top1\\Accuracy\\(\%)} \\
  \midrule 
  \multirow{16}{*}[-2em]{DeiT-Small} & Dense &  0 & 0 & 79.8 \\
  \cmidrule{2-5}
  \vspace{-0.25em}
  & GM & \multirow{3}{*}{50} & 46.7 & 79.3 (79.8 $\uparrow$) \\
  & CAP & & 46.9 & 79.4 (\textbf{79.9} $\uparrow$) \\
  & SViTE-Small &  & 46.3 & 79.7 \\
  \cmidrule{2-5}
  \vspace{-0.25em}
  & GM & \multirow{3}{*}{60} & 56.1 & 79.0 (79.5 $\uparrow$) \\
  & CAP & & 56.2 & 79.3 (\textbf{79.8} $\uparrow$) \\
  & SViTE-Small & & 55.4 & 79.4 \\
  \cmidrule{2-5}
  \vspace{-0.25em}
  & GM & \multirow{3}{*}{75} & 70.1 & 78.0 (78.7 $\uparrow$))\\
  & CAP & & 70.2 & 78.5 (\textbf{79.0} $\uparrow$) \\
  & SViTE-Small & & 70.3 & 77.0 \\
  \cmidrule{2-5}
  \vspace{-0.25em}
  & GM & \multirow{2}{*}{80} & 74.2 & 77.3 (77.9 $\uparrow$) \\
  & CAP & & 74.9 & 78.0 (\textbf{78.6} $\uparrow$) \\
  \cmidrule{2-5}
  \vspace{-0.25em}
  & GM & \multirow{3}{*}{90} & 84.0 & 74.1 (74.7 $\uparrow$) \\
  & CAP & & 84.1 & 75.2 (\textbf{75.8} $\uparrow$) \\
  & SViTE-Small & & 84.1 & 70.1 \\
  \bottomrule
  \end{tabular}
\end{minipage}
\vspace{1em}

For DeiT-Tiny and 50\% sparsity, we achieve significant improvements upon SViTe, and even manage to improve test accuracy relative to the dense model. 
We believe this is due to the choice of augmentation during fine-tuning, and possibly due to regularizing effects of sparsity. 
At 75-80\%, we recover the dense model accuracy. 
We observe a significant accuracy drop only at 90\%. GM pruning also benefits from the choices made in our schedule, outperforming SViTE at 50\% sparsity; yet, there are significant gaps in favor of CAP at higher sparsities, as expected. 

On the 4x larger DeiT-Small model, SViTE performs remarkably well at 50\% sparsity (79.7\%), almost matching the dense model, but CAP outperforms it slightly after fine-tuning (79.9\%). 
In terms of total training budget, SViTE uses 600 epochs to produce each model (and so, the 50\%-sparse one as well), whereas we use a total of 40 epochs for gradual pruning to 50\% + initial fine-tuning, and 100 additional epochs for sparse model fine-tuning. Even if we take into account the original 300 epochs for training the publicly-available dense DeiT checkpoint~\citep{DeiT}, our approach is significantly more efficient (440 vs. 600 epochs). 
The cost savings compound across sparse models, since we obtain all our pruned models from the same end-to-end gradual run. 
At 75\% sparsity, CAP drops $\sim 1\%$ of accuracy relative to dense post-finetuning, with a significant gap of $1\%$ Top-1 relative to GM, 
and $2\%$ Top-1 relative to SViTE. The trend continues for higher sparsities, where we note a remarkable gap of $5.7\%$ Top-1 vs SViTE at 90\% sparsity. 
We obtain similar numbers for DeiT-Base in Table \ref{tab:results-deit-base}; generally, we achieve $\geq$ 99\% recovery at $\geq 75\%$ sparsity, 
showing for the first time that ViT models can be pruned to such sparsities with marginal accuracy loss.  

\vspace{0.75em}
\begin{minipage}{0.48\linewidth}
  \centering
  \tiny
  \begin{tabular}{ccccc}
  \toprule 
  Model & Method & \makecell{Sparsity\\(\%)} & \makecell{FLOP\\Reduction\\(\%)} & \makecell{Top1\\Accuracy\\(\%)} \\
  \midrule 
  \multirow{8}{*}[-1em]{DeiT-Base} & Dense &  0 & 0 & 81.8 \\
  \cmidrule{2-5}
  & CAP & \multirow{2}{*}{50} & 48.5 & \textbf{81.6} \\
  & SViTE-Base & & 48.0 & 81.5 \\
  \cmidrule{2-5}
  & CAP & \multirow{2}{*}{60} & 58.2 & \textbf{81.5} \\
  & SViTE-Base & & 57.5 & 81.3 \\
  \cmidrule{2-5}
  & \multirow{3}{*}{CAP} & 75 & 72.8 & 81.1 (81.2 $\uparrow$) \\
  & & 80 & 77.7 & 80.8 (81.1 $\uparrow$) \\
  & & 90 & 87.4 & 79.7 (80.1 $\uparrow$) \\
  \bottomrule
  \end{tabular}
  \vspace{2em}
  \captionof{table}{Accuracy results for gradual pruning of DeiT-Base model on ImageNet-1k.}
  \label{tab:results-deit-base}
\end{minipage}
\hfill
\begin{minipage}{0.48\linewidth}
\centering
\includegraphics[width=0.9\linewidth]{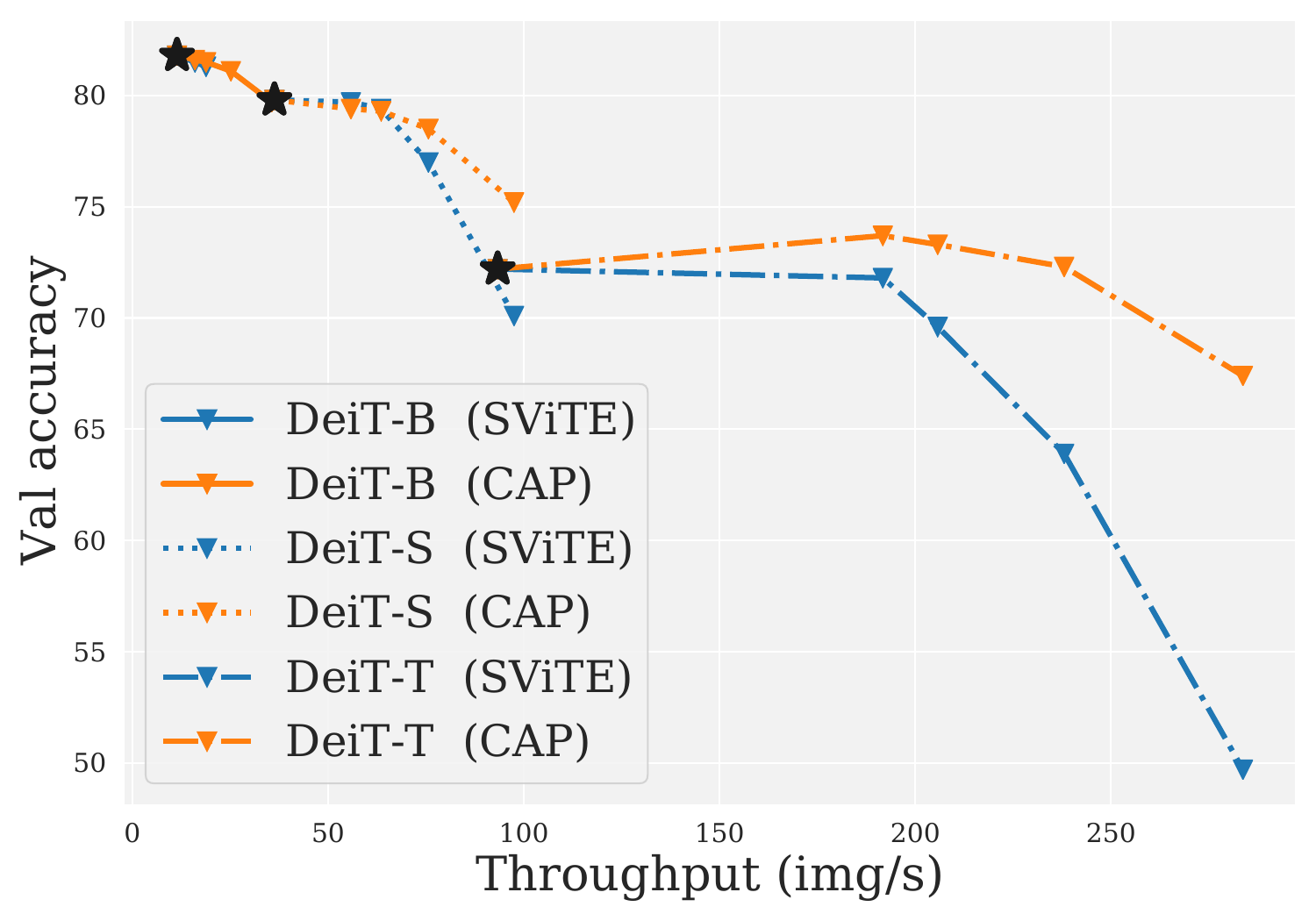}
\captionof{figure}{
    Accuracy vs. real-time throughput for dense ($\star$) and sparse ViT-models.
}
\label{fig:speedup}
\end{minipage}

\vspace{0.5em}
\paragraph{Sparse Speedups.} 
The results in Figure~\ref{fig:speedup} examined the speedups obtained by CAP from unstructured sparsity for
\{50\%, 75\%, 90\%\}-sparse ViT-family (base, small, tiny) models, when executed on a sparsity-aware CPU inference engine~\citep{deepsparse}. Specifically, we executed the models from Table~\ref{tab:vit_gradual_pruning_main}  using 4 cores of an Intel(R) Xeon(R) Gold 6238R CPU, at batch size 64. 
We find it interesting that sparse ViTs build an almost-contiguous Pareto frontier from 82\% to 68\% Top-1 accuracy (Y axis), with a 25x span in throughput (from 11 imgs/second to 260 imgs/second, X axis). Notably, the DeiT-Tiny model obtains a speedup of 2.4x without any accuracy loss, while Base and Small ViTs show 1.5x speedups with minimal loss of accuracy. 
Thus, these results show that unstructured sparsity can be a very promising approach to speeding up ViTs.  
We note that competing methods (e.g. SViTE) would provide similar speedups at the same sparsity levels, but significantly lower accuracy, as shown in the figure, as well as in Table~\ref{tab:vit_gradual_pruning_main}. 

\vspace{0.5em}
\paragraph{Additional Results and Details.} 

The details of the training procedure are presented in Appendix ~\ref{app:training_details}
and the ablation study is conducted in ~\ref{app:ablations}. 
Experiments on one-shot and gradual pruning of several  other vision models (EfficientFormer, ResNet50D, and EfficientNetV2) 
are presented  in Appendix~\ref{app:other_models} where we demonstrate that one can compress them to high sparsity as well, using our approach. 
In Appendix~\ref{app:timings} we show that CAP comes with little additional computational cost compared to  WoodFisher~\citep{kurtic2022optimal}.
In Appendix~\ref{app:composite_compression} we show that CAP can also be applied in conjunction with other types of compression, in particular token pruning~\citep{dynamic_vit},  quantization-aware training, and semi-structured (2:4) sparsity. 
In Appendix~\ref{appendix:ac_dc}, we show that CAP also outperforms AC/DC pruning~\citep{AC-DC}, whereas Appendix~\ref{app:detr_pruning} contains results for the DeTR detection model.  
In Appendix~\ref{app:pruning_scaling} we study the scaling behavior of sparsity solvers with respect to the ConvNext2 model family.

\section{Discussion}

We examined the trade-off between parametrization and accuracy in the context of next-generation vision models, including ViTs and ConvNext architectures, 
and presented a new correlation-aware pruner called CAP, which sets a new state-of-the-art sparsity-accuracy trade-off. 
We have shown for the first time that ViT variants can support significant weight pruning ($\geq 75\%$) at relatively minor accuracy loss ($\leq 1\%$), inducing parameter-accuracy trade-offs that are very similarly to those of CNNs, and that CLIP-pretrained, highly-accurate models can also support sparsity with minor accuracy loss. 
Our results show that, despite their weaker encoded inductive biases,  next-generation vision models do not require over-parametrization, and in fact can be competitive with CNNs in terms of accuracy-per-parameter. Our approach extends to different model families~\cite{ali2021xcit, liu2022convnet} and tasks~\cite{DeTR}, 
and is complementary to other compression approaches, leading to significant practical improvements. 

\section{Acknowledgements}
This project has received funding from the European Research Council (ERC) under the European Union's Horizon 2020 research and innovation programme (grant agreement No 805223 ScaleML). The authors would also like to acknowledge computational support from the ISTA IT department, in particular Stefano Elefante, Andrei Hornoiu, and Alois Schloegl, as well as Amazon EC2 for research credits.
D.K. was supported by Russian Science Foundation, grant 21-11-00373.

\medskip
\newpage

\bibliography{main.bib}
\bibliographystyle{unsrtnat}

\appendix

\section{CAP algorithm description}\label{sec:ovit_pseudocode}

The section below illustrates in the CAP pruning algorithm step-by-step. 
Prunable model weights $\mathbb{R}^{d}$ are partitioned into blocks
of fixed size $B$. Below $\rho_{i}^{(B)}$ denotes the saliency scores 
for weight $i^{\mathrm{th}}$ inside a block it belongs to and $\rho_{i}$ is the score
across the whole model. The steps of the algorithm are listed below:

\begin{algorithm}
\caption{CAP pruning algorithm}\label{alg:ovit}
\begin{algorithmic}[1]
\STATE $\rho_i$ - saliency scores for weights
\STATE Accumulate Fisher inverse blocks $\mathbf{F}$
\FOR {each block}
    \STATE $\mathrm{err} = 0$
    \FOR {element in a block}
        \STATE Select the weight $w_i$ with smallest score $\rho_{i}^{(B)}$ 
        (using the \eqref{eq:obs-saliency-update} for $\rho_{i}$) 
        \STATE Prune $w_i$
        \STATE Update remaining weights in the block via \eqref{eq:obs-saliency-update}
        \STATE $\mathrm{err} += \rho_{i}^{(B)}$
        \STATE $\rho_i \gets \mathrm{err}$
        \STATE Save current state of the block for later merging
        \STATE Update Fisher inverse block
    \ENDFOR
\ENDFOR
\STATE Sort the scores $\rho_i$ in ascending order
\STATE Mark the weights with smallest scores $\rho_i$ as pruned
\FOR {each block}
    \STATE Load the saved state of the block with the weights marked pruned and all remaining alive.
\ENDFOR
\end{algorithmic}
\end{algorithm}

\section{Training details}\label{app:training_details}

\paragraph{Augmentation/regularization recipe}
 
\begin{table}[!h]
  \caption{Summary of the augmentation and regularization procedures used in the work.}
  \label{tab:training_procedure}
  \centering
  \begin{tabular}{ccc}
  \toprule
  Procedure & DeiT & light1 \\
  \midrule
  Weight decay & 0.05 & 0.03 \\
  Label smoothing $\varepsilon$ & 0.1 & 0.1 \\
  Dropout &  \XSolidBrush & \XSolidBrush \\
  Stoch.Depth & 0.1 & 0.0 \\
  Gradient Clip. & \XSolidBrush & 1.0 \\
  \midrule
  H.flip & \Checkmark & \Checkmark  \\
  RRC & \Checkmark & \Checkmark \\
  Rand Augment & 9/0.5 & 2/0.5 \\
  Mixup alpha & 0.8 & 0.0 \\
  Cutmix alpha & 1.0 & 0.0 \\
  Erasing prob. & 0.25 & 0.0 \\
  Erasing count & 1 & 0 \\
  \midrule
  Test crop ratio & 0.9 & 0.9 \\
  \bottomrule
  \end{tabular}
\end{table}

For the gradual pruning experiments (with 300 epochs) we have used cyclic learning schedule, with high learning rate directly after the pruning step with gradual decrease up to the next pruning step.

\begin{table}[!htb]
    \caption{Hyperparameters of the schedules used in gradual pruning.}
    \label{tab:gradual_pruning-hparams}
    \centering
    \begin{tabular}{cccccc}
        \toprule
        Model & Prune freq & LR sched \{$f_{\mathrm{decay}}$, $\eta_{\mathrm{max}}$, $\eta_{\mathrm{min}}$\} & Augm & Batch size & Epochs \\
        \midrule
        DeiT-Tiny & 20 & \{cyclic\_linear, $5 \cdot 10^{-4}$, $1 \cdot 10^{-5}$\} & \textit{light1} & 1024 & 300 \\
        DeiT-Small & 20 & \{cyclic\_linear, $5 \cdot 10^{-4}$, $1 \cdot 10^{-5}$\} & \textit{deit} & 1024 & 300 \\
        \bottomrule
    \end{tabular}
\end{table}

For both DeiT-Tiny and DeiT-Small model during the additional fine-tuning for $100$ epochs we've applied cosine annealing schedule with 
$\eta_{\mathrm{max}} = 5 \cdot 10^{-5}, \eta_{\mathrm{min}} = 1 \cdot 10^{-5}$ and all other parameters the same as in the Table \ref{tab:gradual_pruning-hparams}.

\section{Post-Pruning Recovery}\label{app:recovery}

The choice of augmentation parameters and learning rate schedule is critical for high performance. 
For example, reducing the level of augmentation during fine-tuning for smaller models, e.g. DeiT-Tiny, significant improves performance, 
whereas larger models, e.g. the 4x larger DeiT-Small, requires strong augmentations for best results even during fine-tuning. See Figure~\ref{fig:ablation_gradual_pruning} for an illustration; the augmentation procedure is described in detail in \ref{app:training_details}. 

Moreover, the choice of cyclic learning rate (LR) schedule is critical as well. 
To illustrate this, we compare convergence obtained when using a \emph{cosine annealing} schedule, which is very popular for pruning CNNs ~\citep{kusupati2020soft, singh2020woodfisher, AC-DC},  from $\eta_{max}=5 \cdot 10^{-4}$ to $\eta_{min}= 10^{-5}$, 
while performing pruning updates 2 times more frequently (one update per 10 epochs) than in our standard setup
from the following section \ref{sec:gradual_pruning}.
The results are provided in Figure~\ref{fig:ablation_gradual_pruning}, where cosine annealing (no cycles) is in red. 
All experiments use the CAP pruner, and highlight the importance of the learning rate and augmentation schedules for recovery. 

\begin{figure}[!h]
    \begin{subfigure}{0.49\linewidth}
        \centering
       \includegraphics[width=\linewidth]{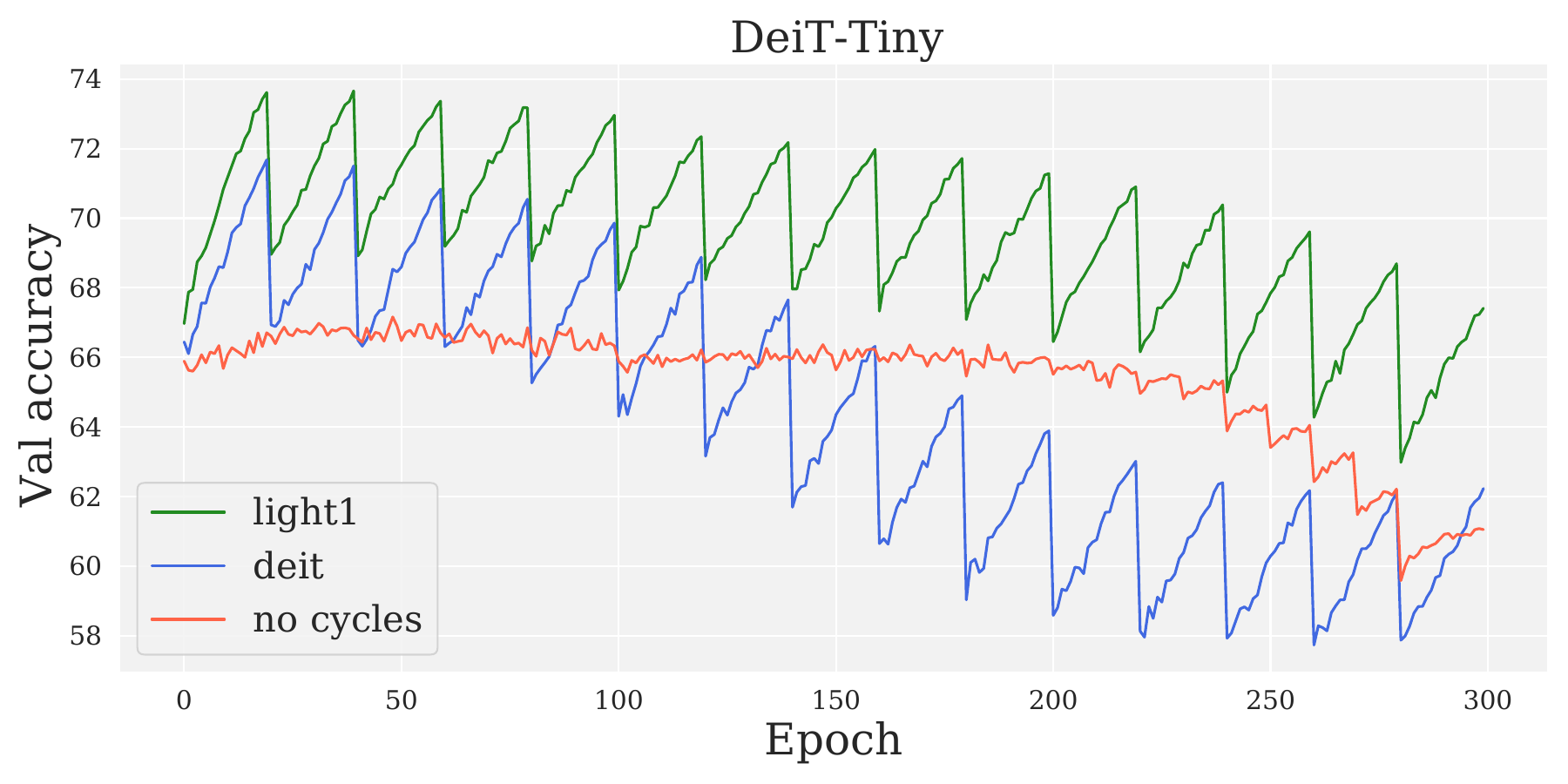}
    \end{subfigure}
    \begin{subfigure}{0.49\linewidth}
        \centering
        \includegraphics[width=\linewidth]{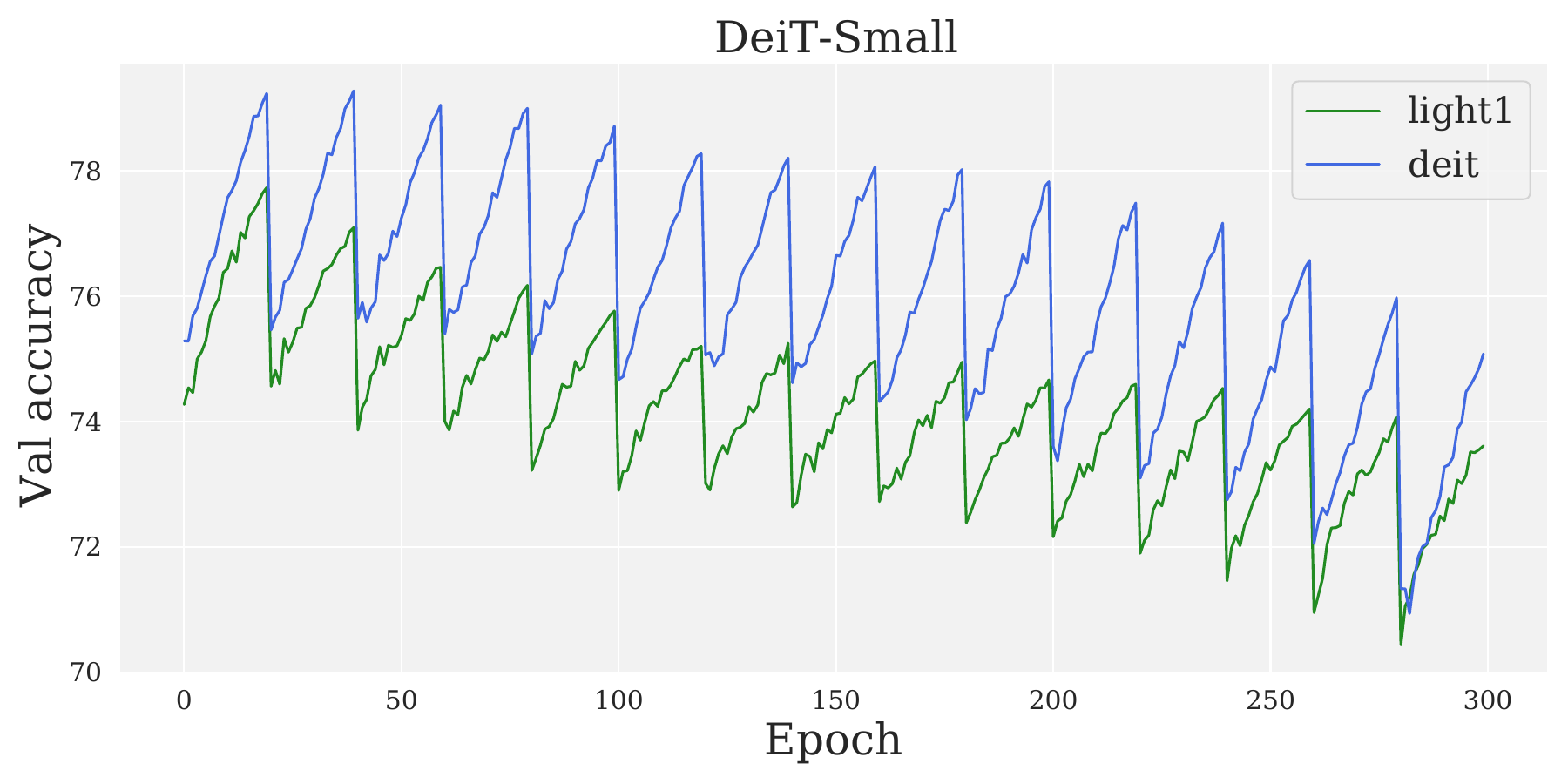}
    \end{subfigure}
    \caption{Ablations of the training setting on DeiT-Tiny (up) and DeiT-Small (down). {\color{green} Green} curves correspond to the \emph{light1} \citep{how_to_train_your_vit} augmentation recipe, {\color{blue} blue} curves to the \emph{deit} \citep{DeiT} recipe. The {\color{red} red} curve follows training with a single (acyclic) cosine annealing schedule, as in~\citep{kusupati2020soft, singh2020woodfisher}.}
    \label{fig:ablation_gradual_pruning}
    \vspace{-2em}
\end{figure}

\section{Additional Results for One-Shot Pruning}\label{app:deit_one_shot_pruning}
In this section we present comparison of Global Magnitude (GM), WoodFisher (WF) and Correlation Aware (CAP) pruners
in one-shot pruning setting for DeiT models \citep{DeiT} of different size (i.e DeiT-Tiny, DeiT-Small, DeiT-Base) 
to study the scaling behavior of ViT sparsification. 

\begin{figure}[!h]
    \begin{subfigure}{0.33\linewidth}
        \centering
        \includegraphics[width=\linewidth]{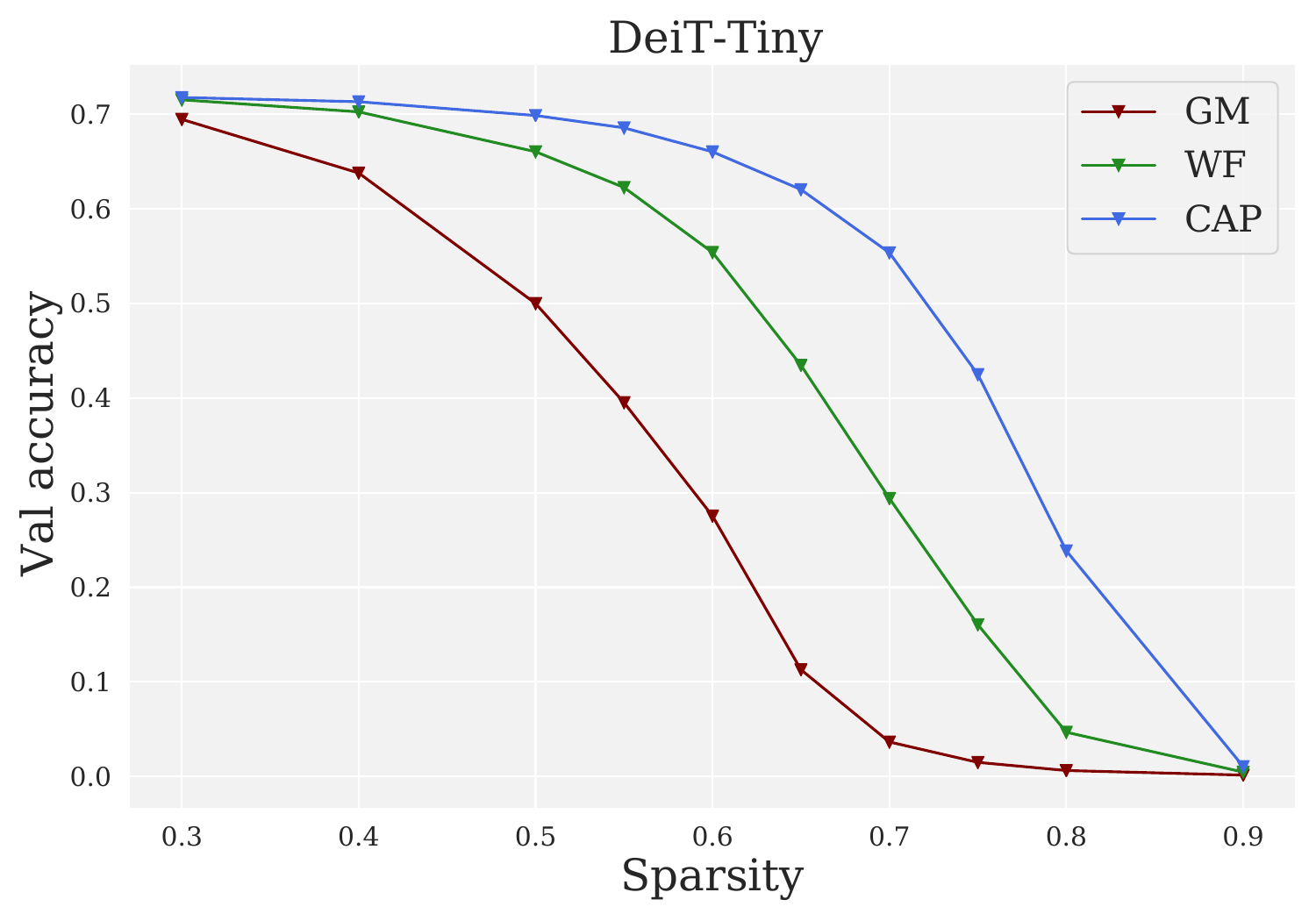}
    \end{subfigure}
    \begin{subfigure}{0.33\linewidth}
        \centering
        \includegraphics[width=\linewidth]{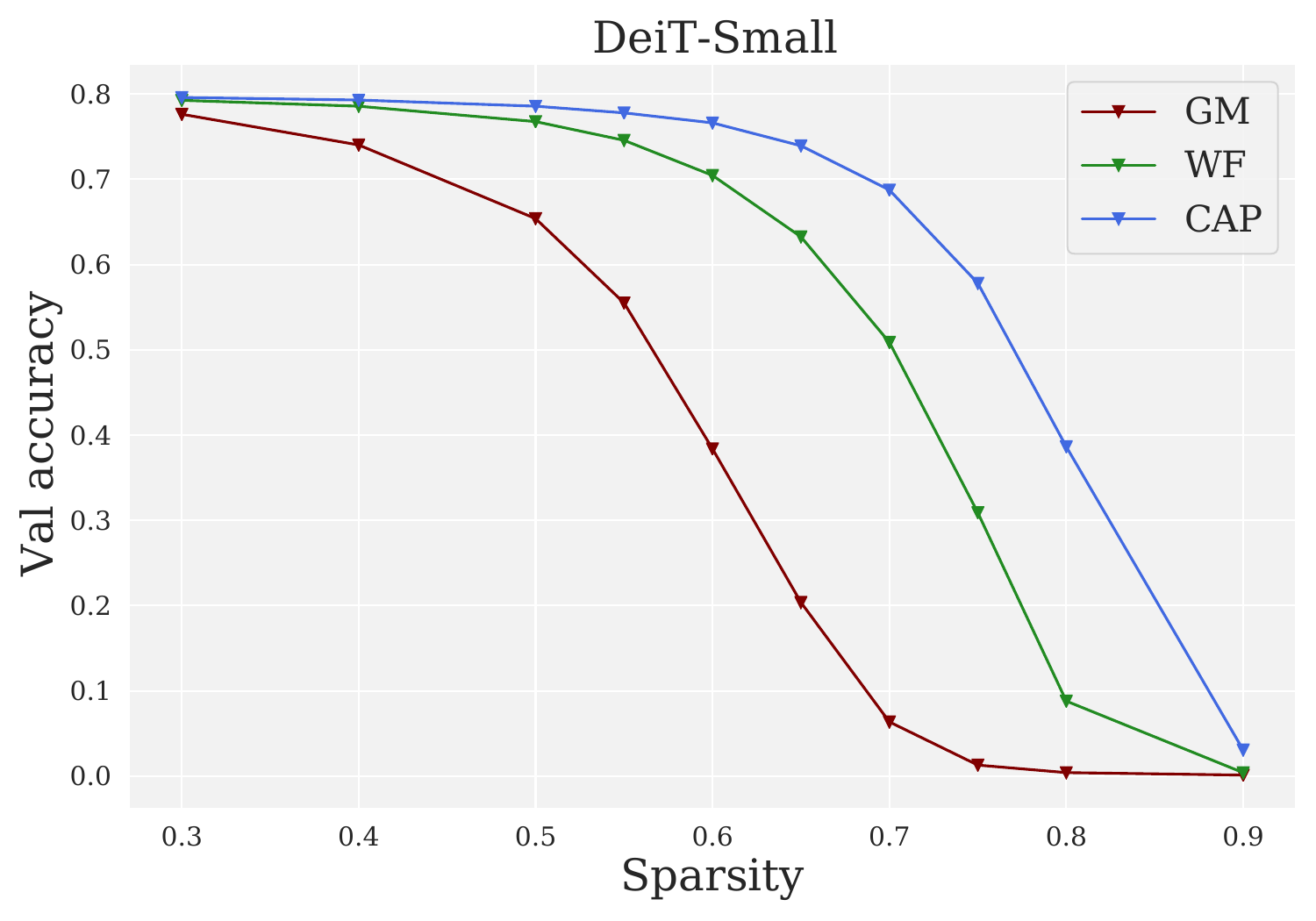}
    \end{subfigure}
    \begin{subfigure}{0.33\linewidth}
        \centering
        \includegraphics[width=\linewidth]{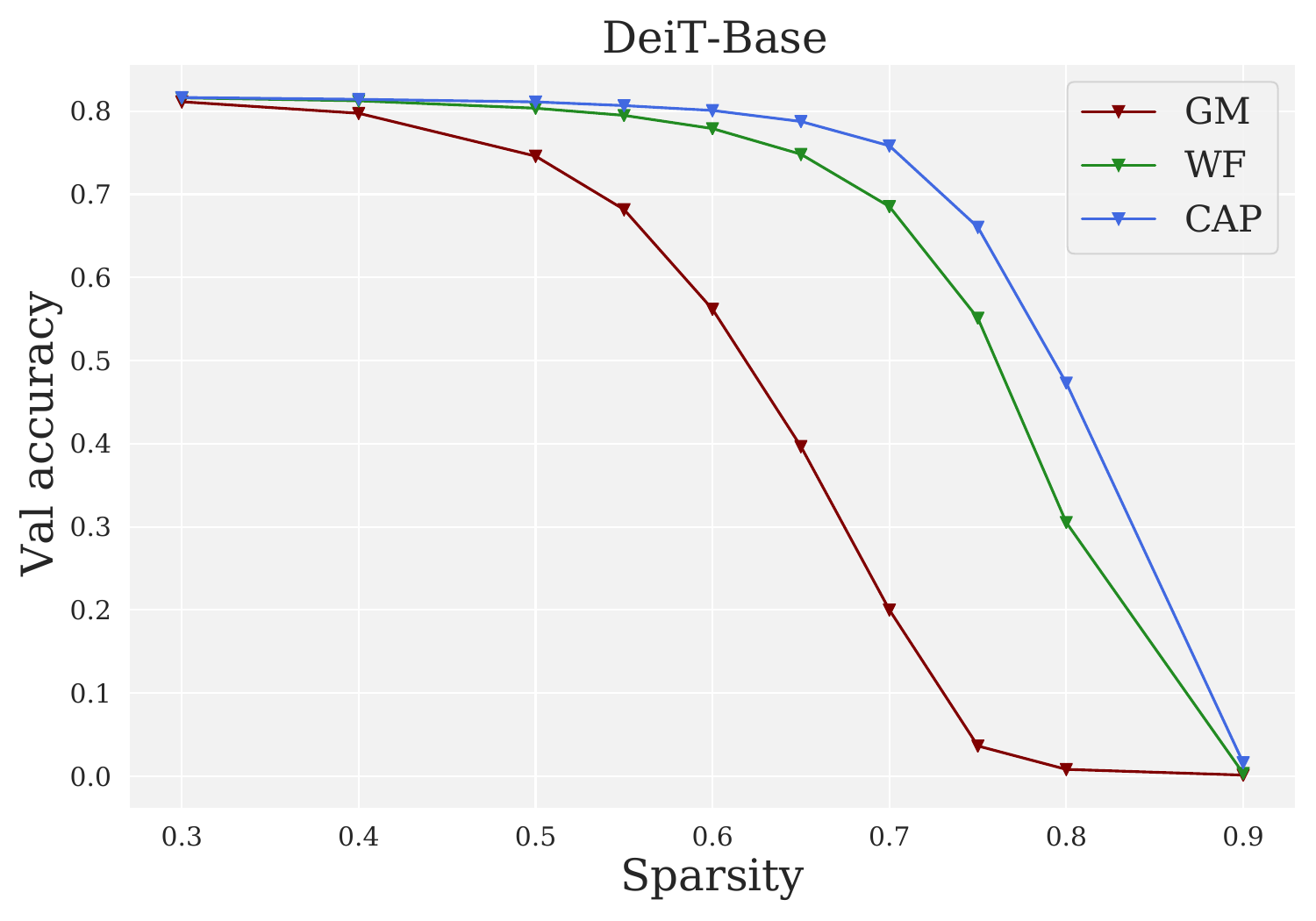}
    \end{subfigure}
    \caption{One-shot pruning of different DeiT versions.}
    \label{fig:one_shot_deit_pruning}
\end{figure}

Notice, that the gap between magnitude pruning and second order methods is 
very pronounced for all models, whereas the difference in performance between CAP and WF decreases with 
increase of the size of model. Nevertheless, CAP performs still noticeably better than WF, especially in high
sparsity regime. 

\section{Experiments with other models}\label{app:other_models}

In the main part of the text, we considered only gradual pruning of ViT models, but the proposed method
is applicable to any architecture for image classification, such as convolutional neural network (CNN) or a ViT-CNN hybrid.
We have selected recently proposed EfficientFormer \citep{li2022efficientformer} 
as a member of ViT-CNN hybrid family and trained it
using the same setting and hyperparameters as for DeiT-Small. 
Two CNN architectures - ResNet50-D \footnote{\texttt{resnet50d} checkpoint with 80.5 \% accuracy for dense model} 
and EfficientNetV2-Tiny \citep{tan2021efficientnetv2} \footnote{\texttt{efficientnetv2\_rw\_t} checkpoint with 82.3 \% accuracy for dense model}, 
considered in this work were trained with the use of augmentation and regularization procedure described in the
\href{https://pytorch.org/blog/how-to-train-state-of-the-art-models-using-torchvision-latest-primitives/}{recent PyTorch blog post}. 
Differently from most of the prior art we have used the ResNet50-D trained with the modern recipe from \href{https://github.com/rwightman/pytorch-image-models}{timm} repository. \vspace{1em}

For ResNet50-D we prune all convolutional weights except 
the first convolution and we keep the classification layer dense. 
In EfficientNetv2-Tiny we do not prune depthwise convolutions since they do not contribute much to the total number of parameters and FLOPs
but they are important to the model performance.
We have set the block size to be 256 for ResNet50-D and 16 for EfficientNetV2-Tiny while keeping all the other hyperparameters of CAP the same as for DeiT experiments. Such a small block size was chosen for EfficientNetV2-Tiny due to the fact that it is the largest common divisor of the prunable weights.  \vspace{1em}

First of all, we conducted comparison between one-shot pruning methods for ResNet50-D. 
We compare between Uniform and Global magnitude pruning, WoodFisher with block size of 256, 
M-FAC with block size of 2048 and CAP with uniform and global sparsity. One can observe that
CAP outperforms all previous methods even when comparing uniform sparsity with global sparsity. 
Contrary to the case of DeiT where there is no much difference in performance between 
uniform and global magnitude pruning for ResNet50-D global sparsity turns out to be much better. This 
results is quite expectable since CNN are not uniform and deeper layers are mode wide than those close to the input.

\begin{figure}[!h]
    \centering
    \begin{subfigure}{0.6\linewidth}
        \includegraphics[width=\linewidth]{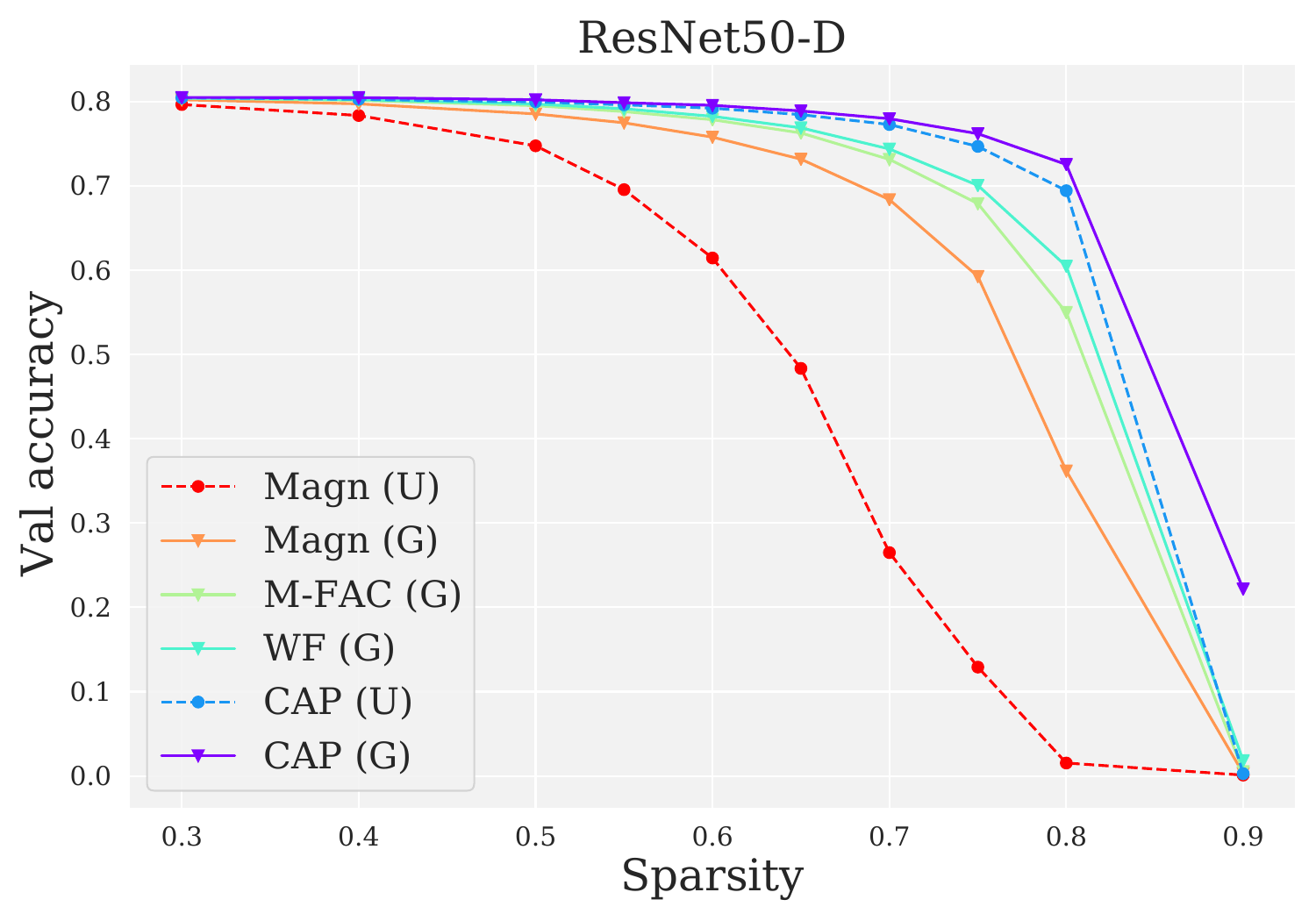}
    \end{subfigure}
    \caption{
        One-shot pruning of ResNet50-D model on ImageNet dataset.
    }
    \label{fig:one_shot_pruning_resnet}
    \vspace{-10pt}
\end{figure}

Next we carried out one-shot + finetuning experiments with ResNet50-D keeping setup the 
same as for DeiT models in Table \ref{tab:vit_one_shot+finetune pruning}. 
We have selected 75 \% and 90 \% as the difference between methods becomes pronounced 
only at sufficiently high sparsity. Notably, the performance of Global magnitude and WF
is roughly the same, the initial difference after one-shot pruning  between WF and Global Magnitude
vanishes during the finetuning procedure, whereas there is still a gap in performance between CAP and other methods.

\begin{table}[!ht]
   \vspace{-10pt}
  \caption{One-shot + fine-tuning on ImageNet.}
  \label{tab:resnet50d_one_shot+finetune}
  \centering
  \small
  \begin{tabular}{cccc}
  \toprule
  Model & Method & Sparsity (\%) & Top1-Accuracy (\%) \\
  \midrule 
  \multirow{8}{*}{ResNet-50D} & Dense & 0 & 80.5 \\
  \cmidrule{2-4}
  & GM & \multirow{3}{*}{75} & 79.0 \\
  & WF & & 79.0 \\
  & CAP & & \textbf{79.2} \\
  \cmidrule{2-4}
  & GM & \multirow{3}{*}{90} & 74.7 \\
  & WF & & 74.8 \\
  & CAP & & \textbf{75.2} \\
  \bottomrule
  \end{tabular}
\end{table}

Finally we conducted gradual pruning runs following the same sparsification schedule as for DeiT-models. 
The EfficientFormer and EfficientNet models despite being already very optimized and parameter efficient can be still
compressed with small drop in accuracy. 

\begin{table}[!h]
  \caption{Gradual pruning on ImageNet.  Parentheses followed by the upwards directed arrow denote additional fine-tuning for 100 epochs.}
  \label{tab:other_models_gradual_pruning}
  \centering
  \small
  \begin{tabular}{cccc}
  \toprule
  Model & Method & Sparsity (\%) & Top1-Accuracy (\%) \\
  \midrule 
  \multirow{6}{*}{EffFormer-L1} & Dense & 0 & 78.9 \\
  \cmidrule{2-4}
  & \multirow{4}{*}{CAP} & 50 &  78.0 \\
  & & 60 & 77.4 \\
  & & 75 & 76.4 \\
  & & 90 & 72.4 (72.8 $\uparrow$) \\
  \midrule
  \multirow{6}{*}{ResNet-50D} & Dense & 0 & 80.5 \\
  \cmidrule{2-4}
  & \multirow{4}{*}{CAP} & 50 & 79.8 \\
  & & 60 & 79.7 \\
  & & 75 & 79.2  (79.6 $\uparrow$) \\
  & & 90 & 77.1 (77.5 $\uparrow$) \\
  \midrule
  \multirow{6}{*}{EffNetV2-Tiny} & Dense & 0 & 82.4 \\
  \cmidrule{2-4}
  & \multirow{4}{*}{CAP} & 50 &  81.0 \\
  & & 60 & 80.6 \\
  & & 75 & 79.6 (80.0 $\uparrow$) \\
  & & 90 & 75.0  \\
  \bottomrule
  \end{tabular}
\end{table}

\section{Scaling behaviour of pruning with respect to model size.}\label{app:pruning_scaling}

To study the scaling behavior with respect to model size we took all variants 
of the ConvNext2 family of models \cite{woo2023convnext} since it covers
wide range of model sizes except for the large one due to the memory and compute constraints.
The smallest model from the family - ConvNext2-Atto has 3.7M parameters whereas the largest considered
ConvNext2-Large has 198M parameters. All the models were pruned to
50\% in one-shot. We observed that the relative accuracy drop 
(difference between accuracy of the dense and sparse model)
initially decreases with increase of model size and then reaches a plateau. 
CAP consisently outpeforms WF across all scales and the difference is the most pronounced
for the smallest model. 

\begin{figure}[!htb]
\centering
    \begin{subfigure}{0.6\linewidth}
        \includegraphics[width=\linewidth]{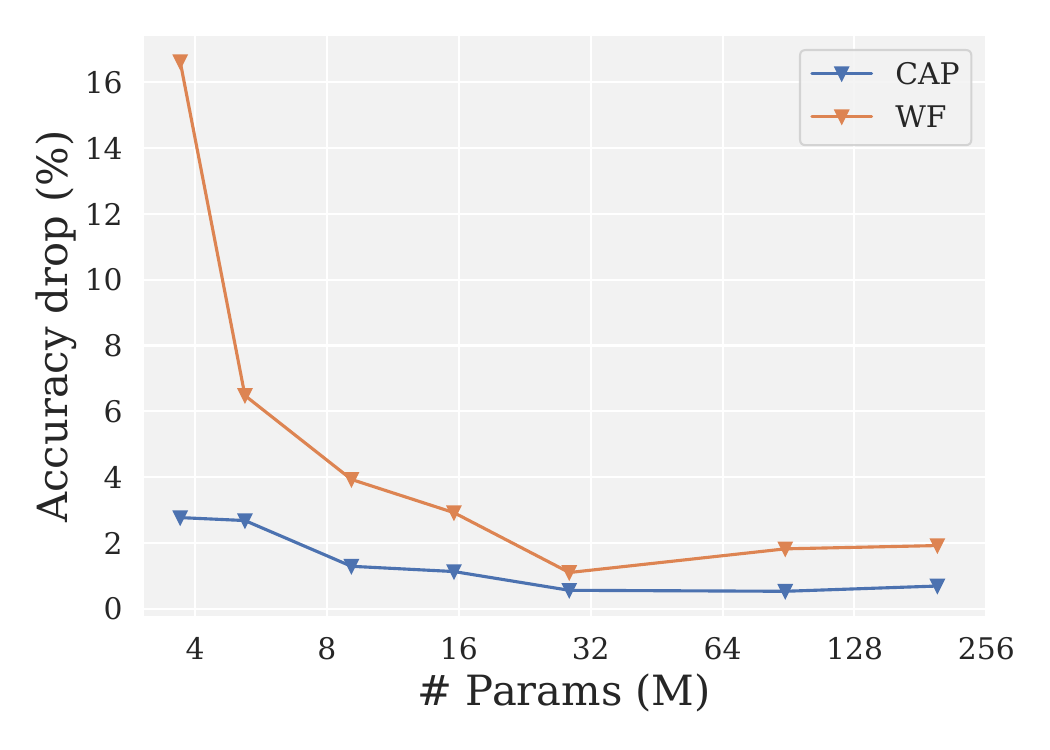}
    \end{subfigure}
    \caption{\textbf{Left}: CAP vs WoodFisher for pruning of ConvNext2 family.}
    \label{fig:convnext2_pruning}
\end{figure}

\newpage

\section{Timings}\label{app:timings}

Any algorithms involving second order loss
information are believed to require tremendous amounts of compute. 
Time required for calculation of pruning scores and the OBS update 
comprises collection of grads, Fisher inverse rank-1 updates and additional pruning iteration
for CAP. We have measured the time of single pruning step for DeiT-Small 
and present the results in Table \ref{tab:pruning_timings}. 
All measurements were done on a single RTX A6000 GPU with 48GB of memory. 
One can observe that the amount of time needed to perform a pruning update is not very large, 
especially when compared to the duration of typical training procedure of modern computer vision models on 
ImageNet that usually takes several days on a multi-gpu node. 
Note that the additional step for CAP adds small fraction of total 
computational time relative to other steps of the OBS method. 

\begin{table}[!htb]
    \caption{Minutes per pruning step for DeiT-Small.}
    \label{tab:pruning_timings}
    \centering
    \begin{tabular}{ccc}
        \toprule
        Model & Method & Time (minutes) \\
        \midrule
        \multirow{2}{7em}{DeiT-Small} & Fast WoodFisher~\citep{kurtic2022optimal} & 20 \\
         & CAP & 23 \\
        \bottomrule
    \end{tabular}
\end{table}

\section{Composite compression}\label{app:composite_compression}

In addition to weight pruning one can decrease storage and inference cost with the 
help of other compression approaches: quantization (casting weights and activations 
to lower precision) and token pruning specific for the transformer architecture. 

\subsection{Quantization-Aware Training} 

Weight quantization is done in the following way - one takes sparse checkpoint and then runs quantization aware 
training (QAT). We ran QAT training for 50 epochs with linearly decaying learning rate schedule
from $\eta_{\text{max}} = 10^{-4}$ to $\eta_{\text{min}} = 10^{-5}$. 
Models are quantized to 8-bit precision. In all experiments performed accuracy of 
quantized model almost reproduces the accuracy of the sparse model stored in full precision.

\begin{table}[!htb]
    \caption{ImageNet-1K top-1 accuracy for sparse models after QAT training.}
    \label{tab:qat_training}
    \centering
    \begin{tabular}{ccc}
        \toprule
        Model & Sparsity (\%) & Accuracy (\%) \\
        \midrule
        DeiT-Tiny & 75 & 72.2 \\
        DeiT-Small & 75 & 77.7 \\
        DeiT-Base & 75 & 81 \\
        \bottomrule
    \end{tabular}
\end{table}

\subsection{Token Pruning}

There are different approaches for token pruning proposed in the literature. 
In this work we follow the one from \citep{dynamic_vit}. Specifically, in DynamicViT one selects the
ratio of tokens being pruned at each step with the lowest importance score, predicted by the model
itself. Following the main setup from the paper we prune tokens after $3^{\mathrm{rd}}$, 
$6^{\mathrm{th}}$, $9^{\mathrm{th}}$ block, and the token pruning ratio after each block is $\rho=0.2$ 
(i.e 20\% least improtant tokens are pruned). 

\begin{table}[!htb]
  \caption{ImageNet-1K top-1 accuracy for sparse models with token pruning.}
  \label{tab:dynamic_vit_pruning}
  \centering
  \small
  \begin{tabular}{cccc}
  \toprule
  Model & Method & Sparsity (\%) & Top1-Accuracy (\%) \\
  \midrule 
  \multirow{3}{8em}{DynamicViT-Tiny} & \multirow{3}{*}{CAP} & 50 &  72.0 \\
  & & 60 & 71.6 \\
  & & 75 & 70.2 \\
  \midrule
  \multirow{3}{8em}{DynamicViT-Small} & \multirow{3}{*}{CAP} & 50 &  79.5 \\
  & & 60 & 79.4 \\
  & & 75 & 78.7 \\
  \bottomrule
  \end{tabular}
\end{table}

\subsection{Semi-structured sparsity.}\label{app:semistructured_sparsity}

While CPUs can utilize sparsity patterns of arbitrary form to speed-up the computations at the 
present time modern GPU accelerators can handle only restricted form of unstructured sparsity, namely the
$N:M$ sparsity pattern that enforces exactly $N$ non-zero values for each block of $M$ weights.
Namely, since the introduction of Ampere architecture NVIDIA GPUs have special kernels that can work with $2:4$
sparse matrices \citep{https://doi.org/10.48550/arxiv.2104.08378}. One can integrate the $N:M$ sparsity in the CAP framework without significant changes. 
The only difference with the original CAP approach 
is that while running the CAP iterations one doesn't prune a given weight in case 
in a group of $M$ weights to which this weights belongs to there are $M-N$ zero weights. 
Since the sparsity pattern is significantly constrained compared to generic unstructured sparsity pattern
drop in performance after doing pruning step and consequent fine-tuning is more challenging than it would be
for unconstrained sparsity. In experiments below we prune models to $2:4$ sparsity either in one-shot setting
and one-shot+finetune. We apply shorter (10 epochs) and longer (50 epochs) finetuning procedure with linearly decaying 
learning rate schedule. According to the results in the Table \ref{tab:semistructured_pruning} CAP significantly outperforms 
competitive methods for one-shot pruning, although the drop in performance is quite large for all methods. 

\vspace{1em}

After finetuning procedure difference between different methods decreases.
Nevertheless, there is some gap in performance between second order methods and magnitude pruning even 
after relatively long finetuning. 

\vspace{1em}

\begin{table}[!h]
  \caption{Semi-structured $2:4$ pruning of ViT models.}
  \label{tab:semistructured_pruning}
  \centering
  \tiny
  \begin{tabular}{cccc}
  \toprule
  Model & Method & Epochs & Top1-Accuracy (\%) \\
  \midrule 
  \multirow{10}{*}{DeiT-Tiny} & Dense &  & 72.2 \\
  \cmidrule{2-4}
  & GM & \multirow{2}{*}{0} & 24.4 \\
  & WF & & 44.1 \\
  & CAP & & \textbf{55.9} \\
  \cmidrule{2-4}
  & GM & \multirow{3}{*}{10} & 68.8 \\
  & WF & & 71.1 \\
  & CAP & & \textbf{71.5} \\
  \cmidrule{2-4}
  & GM & \multirow{3}{*}{50} & 72.5 \\
  & WF & & \textbf{72.7} \\
  & CAP & & \textbf{72.7} \\
  \midrule
  \multirow{10}{*}{DeiT-Small} & Dense &  & 79.8 \\
  \cmidrule{2-4}
  & GM & \multirow{2}{*}{0} & 53.6 \\
  & WF & & 67.8 \\
  & CAP & & \textbf{72.0} \\
  \cmidrule{2-4}
  & GM & \multirow{3}{*}{10} & 77.9 \\
  & WF & & \textbf{78.1} \\
  & CAP & & 78.0 \\
  \cmidrule{2-4}
  & GM & \multirow{3}{*}{50} & 78.6 \\
  & WF & & \textbf{79.0} \\
  & CAP & & \textbf{79.0} \\
  \midrule
  \multirow{10}{*}{DeiT-Base} & Dense &  & 81.8 \\
  \cmidrule{2-4}
  & GM & \multirow{2}{*}{0} & 66.4 \\
  & WF & & 73.7 \\
  & CAP & & \textbf{78.1} \\
  \cmidrule{2-4}
  & GM & \multirow{3}{*}{10} & 81.2 \\
  & WF & & \textbf{81.3}  \\
  & CAP & & \textbf{81.3} \\
  \cmidrule{2-4}
  & GM & \multirow{3}{*}{50} & \textbf{81.7} \\
  & WF & & 81.6 \\
  & CAP & & \textbf{81.7} \\
  \bottomrule
  \end{tabular}
\end{table}

To demonstrate practical benefits from 2:4 sparsity pattern we compiled both 
sparse and dense models via TensorRT engine and compared the throughput. 
The inference was executed on Nvidia T4 GPU with batch size of 64 in half precision.
Sparsity allows for small but certain speedup for models of different scale. 

\begin{table}[!htb]
    \caption{Speedup factors for $2:4$ sparsity.}
    \label{tab:semistructured_pruning_speed-ups}
    \centering
    \begin{tabular}{cc}
        \toprule
        Model & Speedup \\
        \midrule
        DeiT-Tiny & 1.07 \\
        DeiT-Small & 1.07 \\
        DeiT-Base & 1.10 \\
        \bottomrule
    \end{tabular}
\end{table}

\section{CAP/WF hyperparameters}
\label{app:ablations}

Following the oBERT's directions~\citep{kurtic2022optimal} on identifying the optimal set of hyperparameters via one-shot pruning experiments, we conduct a grid search over the three most important hyperparameters:
\begin{itemize}
    \item Number of grads collected for Fisher inverse
    \item Dampening constant $\lambda$
    \item Block size  
\end{itemize}

The more grads are collected, the more accurate is the empirical Fisher inverse estimate, however, more compute is required at the same time. We chose $N=4096$ as a point from which further increase of Fisher samples doesn't improve performance a lot. Dependence of the one-shot pruning performance at different sparsities vs number of grads is presented on Figure \ref{fig:one_shot_pruning_ng}.

\begin{figure}[!h]
    \begin{subfigure}{0.49\linewidth}
        \centering
        \includegraphics[width=\linewidth]{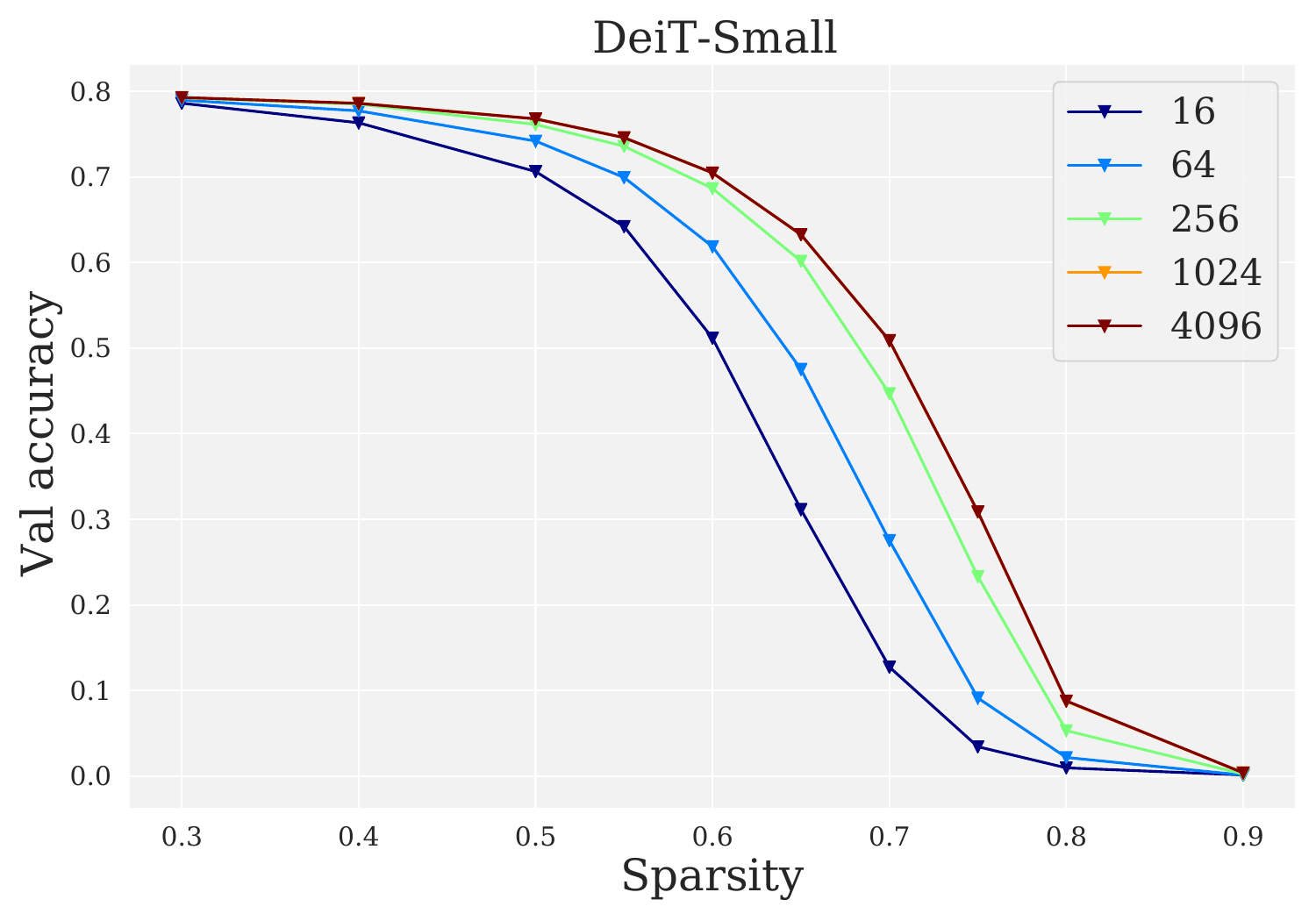}
    \end{subfigure}
    \begin{subfigure}{0.49\linewidth}
        \centering
        \includegraphics[width=\linewidth]{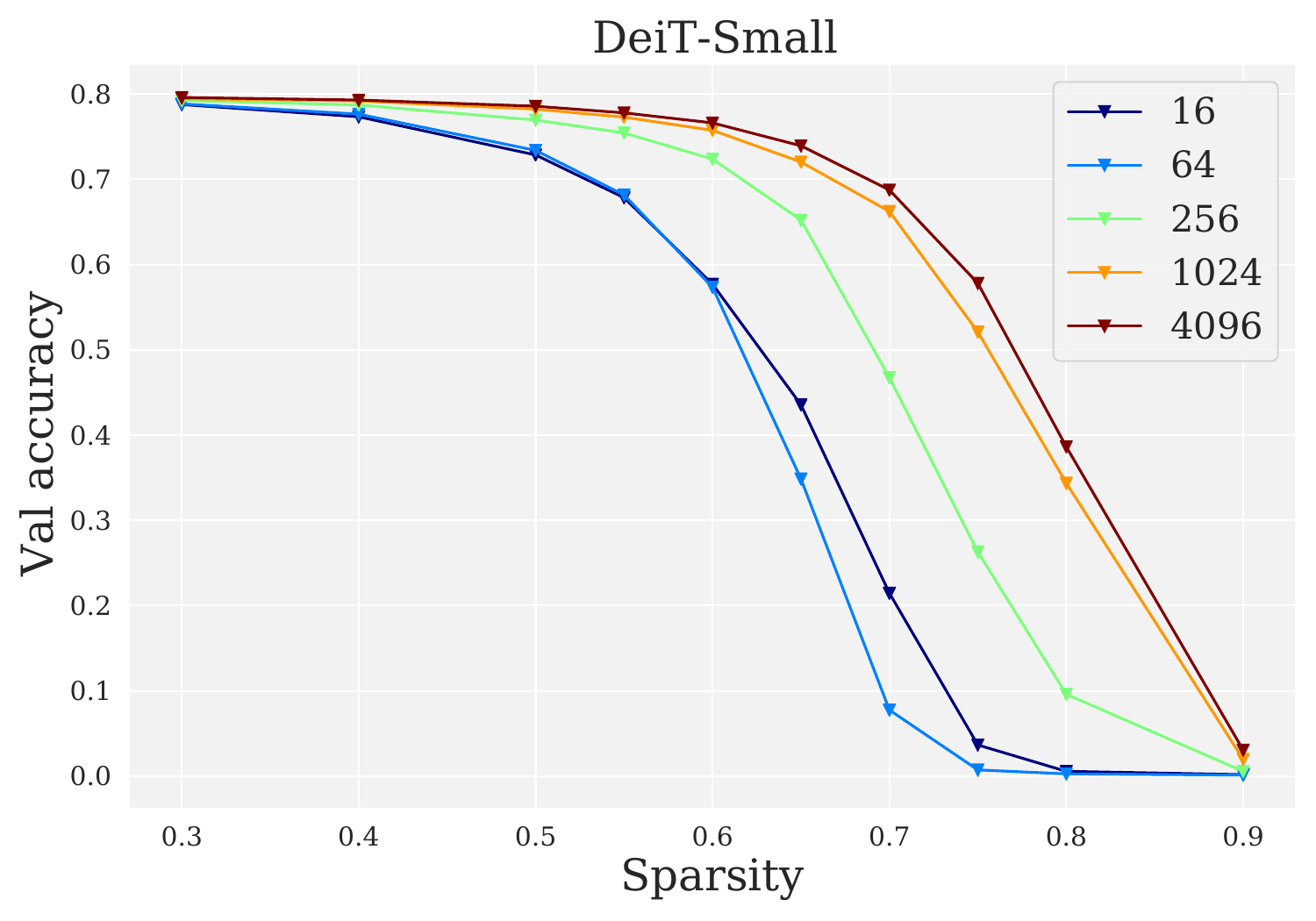}
    \end{subfigure}
    \caption{
        \textbf{Left}: One-shot pruning performance of WoodFisher.
        \textbf{Right}: One-shot pruning performance of CAP.
    }
    \label{fig:one_shot_pruning_ng}
\end{figure}

\vspace{1em}

The next parameter to be studied is the dampening constant 
$\lambda$ in. This constant regularizes the empirical Fisher matrix and allows to avoid instabilities in computation of the inverse. However, this constant decreases the correlation between different weights and in the limit $\lambda \rightarrow \infty$ OBS reduces to magnitude pruning. The optimal dampening constant for CAP ($\lambda_{\mathrm{opt}} = 10^{-8}$) is smaller than the one for WoodFisher  
($\lambda_{\mathrm{opt}} = 10^{-6}$), i.e CAP remains numerically and computationally stable with smaller amount of regularization
compared to WF (we observed that for $\lambda < 10^{-7}$ WF performance starts to deteriorate rapidly). 

\begin{figure}[!h]
    \begin{subfigure}{0.49\linewidth}
        \centering
        \includegraphics[width=\linewidth]{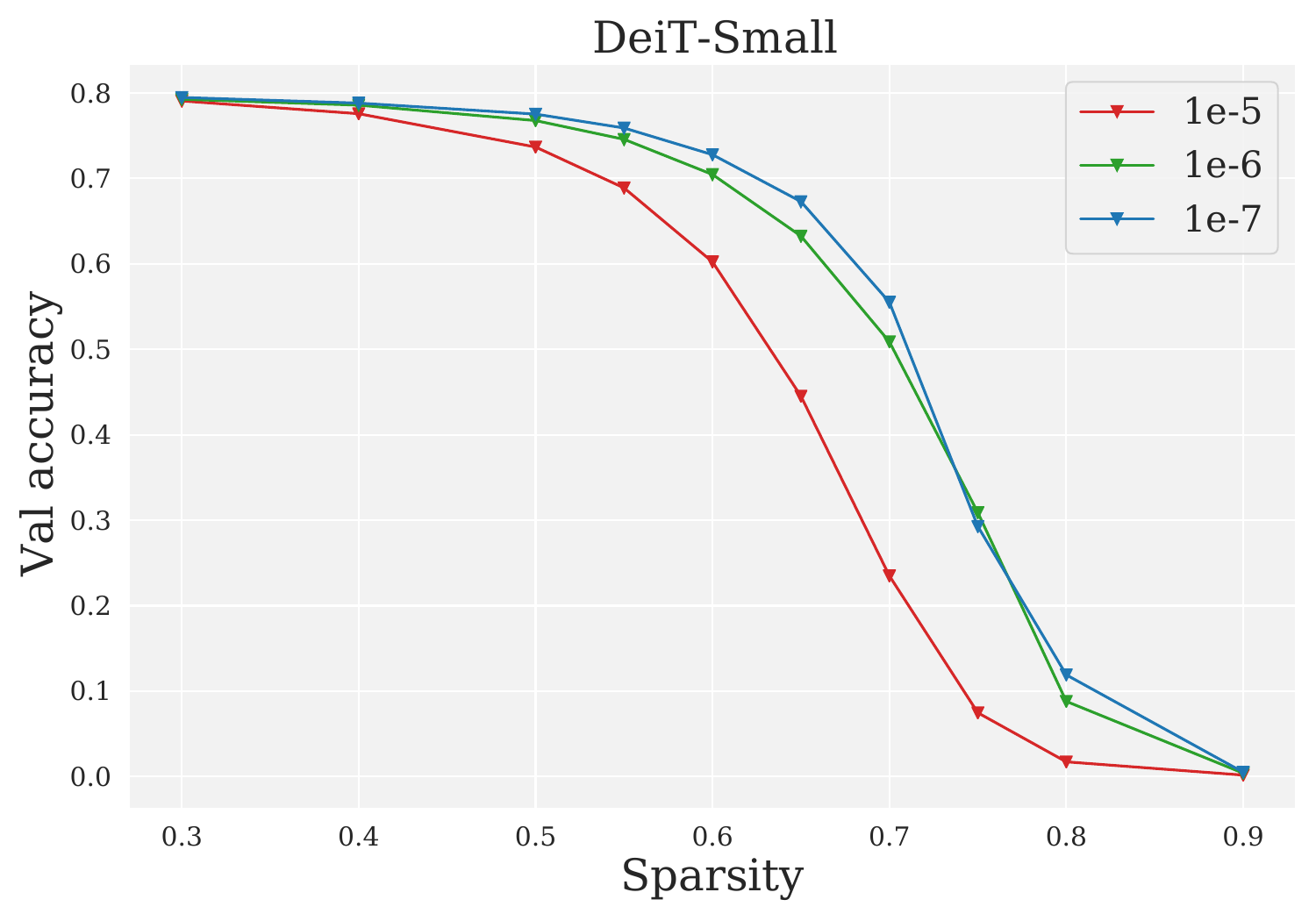}
    \end{subfigure}
    \begin{subfigure}{0.49\linewidth}
        \centering
        \includegraphics[width=\linewidth]{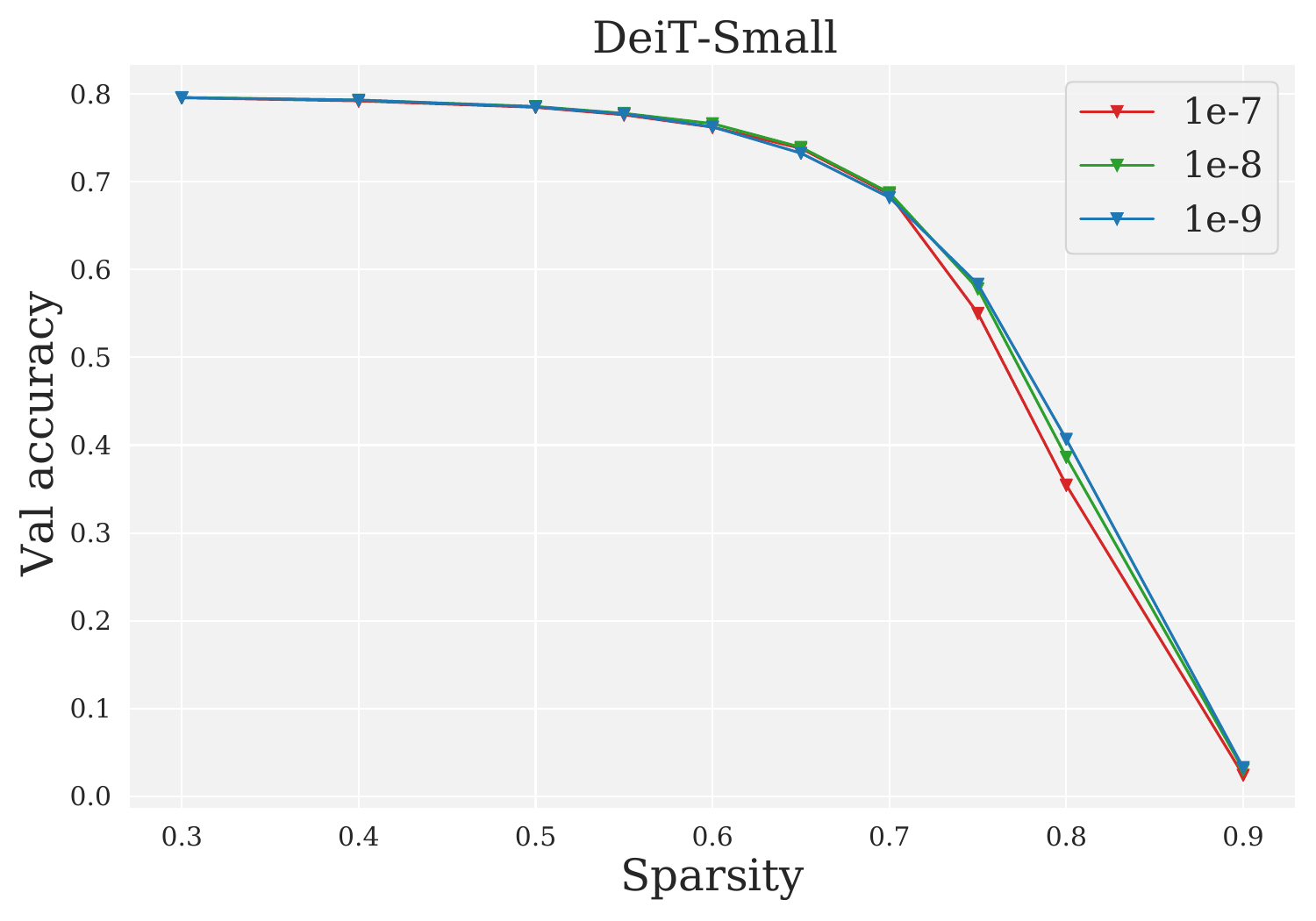}
    \end{subfigure}
    \caption{
        \textbf{Left}: One-shot pruning performance of WoodFisher.
        \textbf{Right}: One-shot pruning performance of CAP.
    }
    \label{fig:one_shot_pruning_damp}
\end{figure}

And the last but not the least important parameter is 
the block size in \citep{singh2020woodfisher}. 
The larger the block size is, the more correlations between different weights are taken into account. However, as mentioned in \ref{sec:background} the computational and storage cost scales with the block size. Moreover, for a fixed number of gradients in the Fisher estimate matrix with larger block sizes is likely to be worse conditioned. Therefore, one would like to work with smaller block sizes
but not to keep the approximation as accurate as possible. We've selected block size according to the accuracy-efficiency trade-off. 

\begin{figure}[!h]
    \begin{subfigure}{0.49\linewidth}
        \centering
        \includegraphics[width=\linewidth]{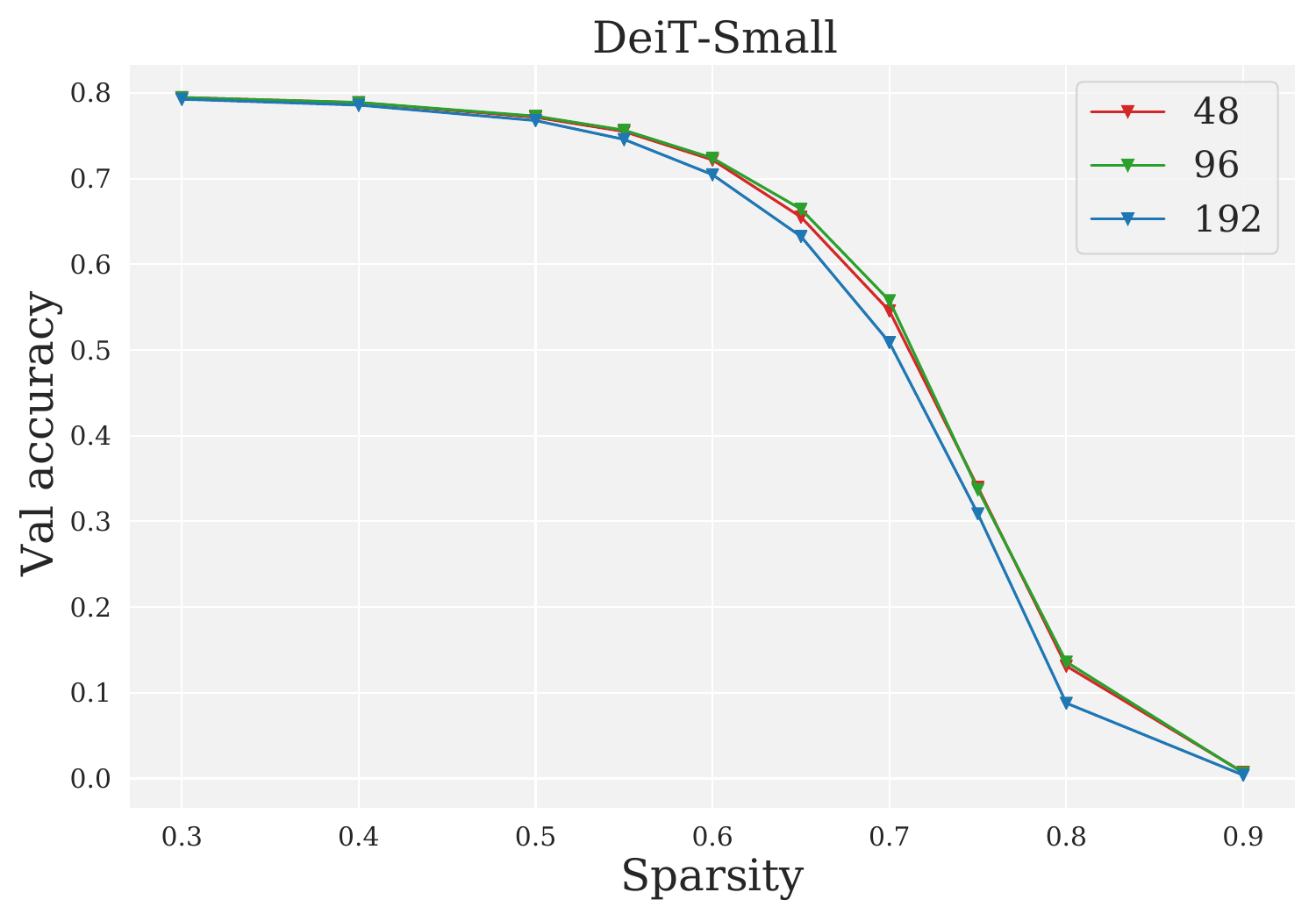}
    \end{subfigure}
    \begin{subfigure}{0.49\linewidth}
        \centering
        \includegraphics[width=\linewidth]{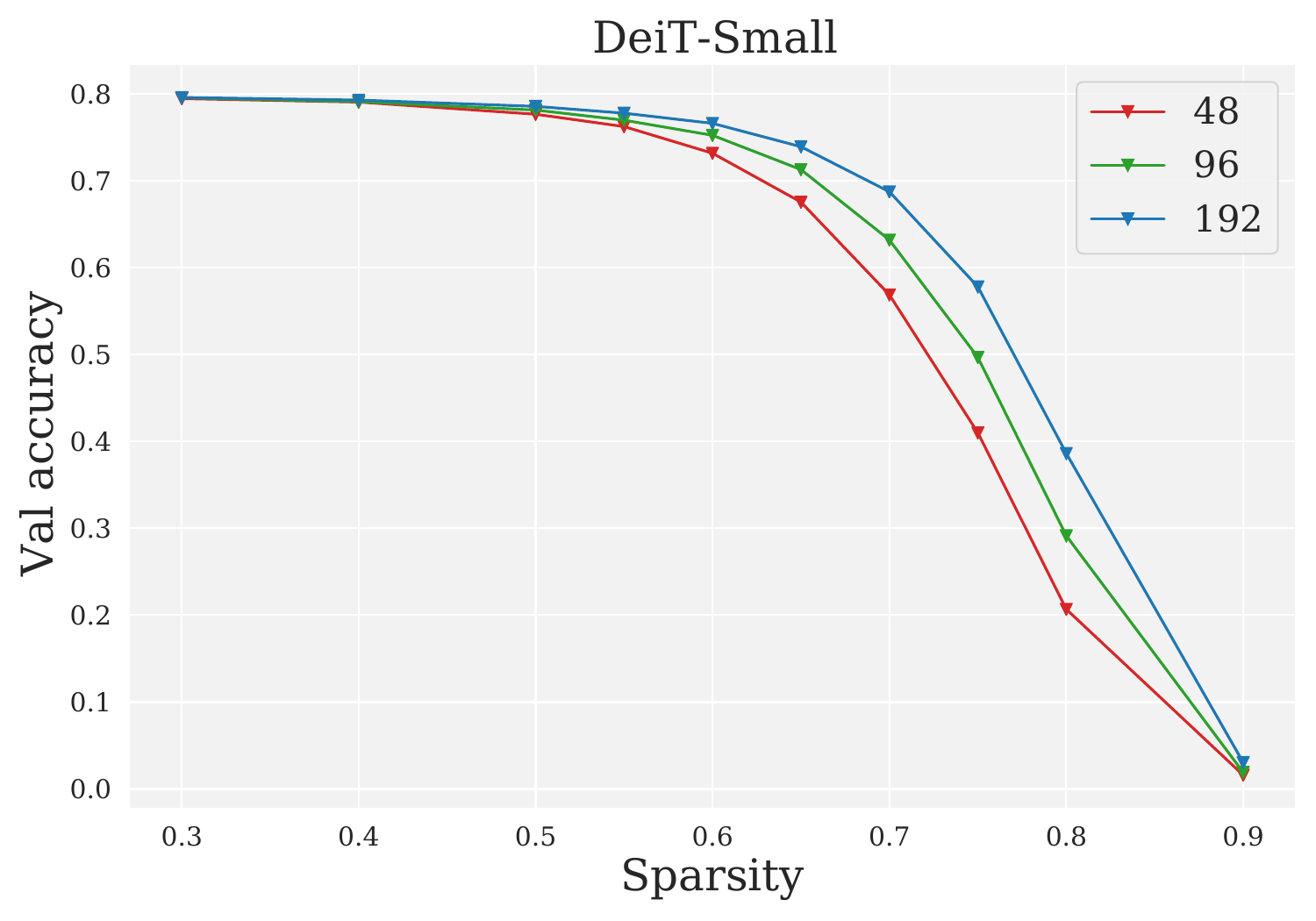}
    \end{subfigure}
    \caption{
        \textbf{Left}: One-shot pruning performance of WoodFisher.
        \textbf{Right}: One-shot pruning performance of CAP.
    }
    \label{fig:one_shot_pruning_b}
\end{figure}

In addition, we've studied the benefit from application of multiple recomputations in the one-shot pruning setting for WoodFisher and CAP. Since the assumption of static Fisher matrix $\mathbf{F}(\mathbf{w}^{*})$ doesn't hold in general, we expect that multiple recomputations are likely to result in higher one-shot accuracy in accordance with the result from \citep{M-FAC}. This is indeed the case. The gain from recomputations is more pronounced for WoodFisher, since CAP already performs implicit Fisher inverse updates in its operation. Yet, the efect is not vanishing even for the case of CAP.

\begin{figure}[!h]
    \centering
    \begin{subfigure}{0.6\linewidth}
        \includegraphics[width=\linewidth]{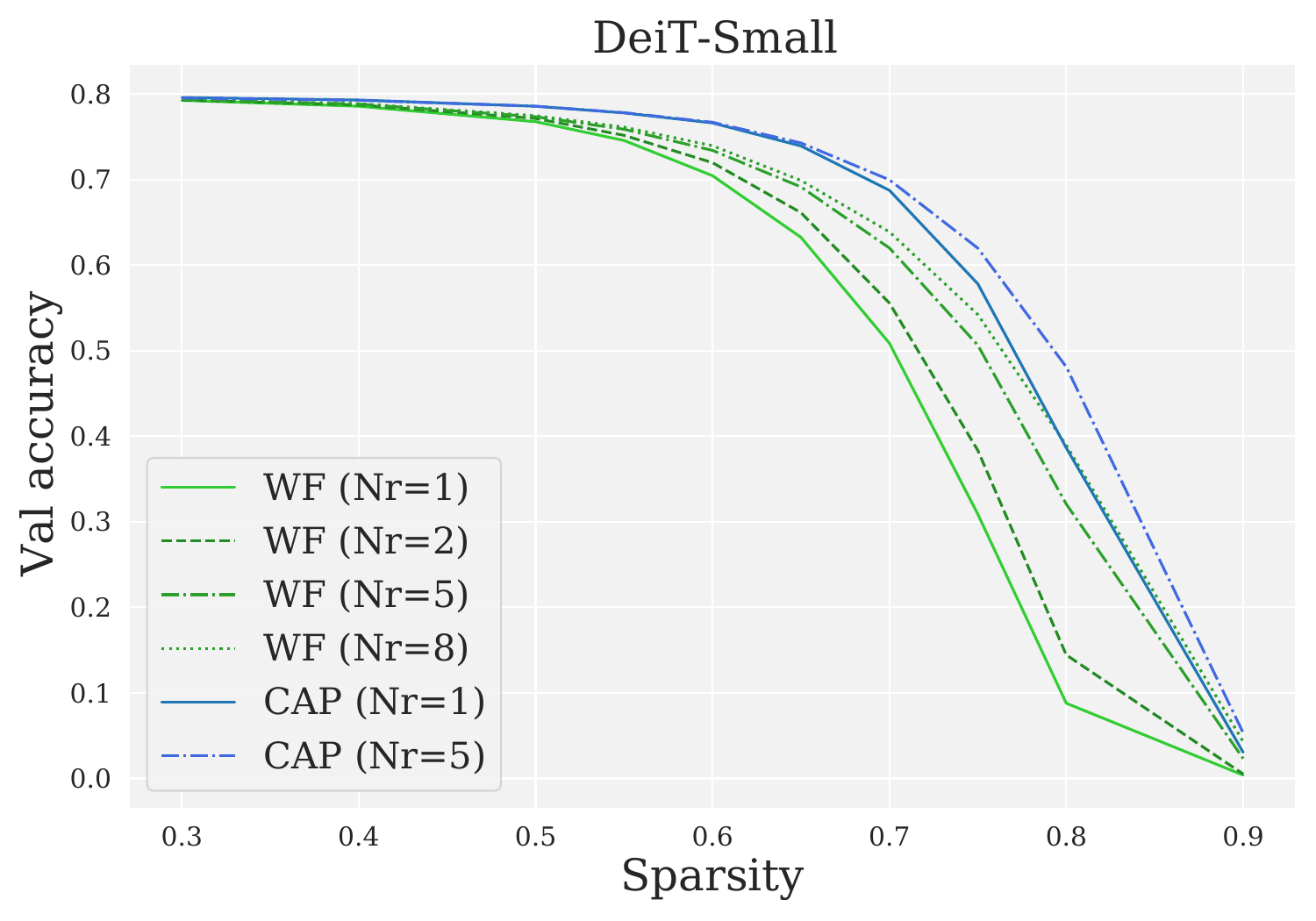}
    \end{subfigure}
    \caption{
        One-shot performance for WF and CAP with different number of recomputations $N_r$. 
    }
    \label{fig:one_shot_pruning_nr}
\end{figure}

\section{Details and hyperparameter choices for other pruning methods.}\label{app:other_method_hyperparams}

In this section we provide some additional details about methods compared
on Figures \ref{fig:one_shot_pruning} and \ref{fig:one_shot_pruning_resnet}.

The popular Movement Pruning \citep{sanh2020movement} computes the weight saliency scores during the training 
procedure, hence it is not a one-shot pruning method by the definition. We have observed 
that use of the naive elementwise product of gradient and weights ( i.e $\rho_{i} =  w_i \odot \nabla_{w_i} \mathcal{L} (w)$) 
leads to a poor performance, significantly below even the Magnitude Pruning baseline. 
However, the following first order pruning criterion:
\begin{equation}
 \rho_{i} = \sum_{k=1}^{N} \Vert w_i^{(k)} \odot \nabla_{w_i} \mathcal{L}^{(k)} (w) \Vert   
\end{equation}
allows to get reasonable saliency scores that produce more accurate sparse models
than Magnitude Pruning. However, its performance is still inferior to any of the second order pruning methods. 
This method is denoted as  GrW (Gradient times weight) on Figures \ref{fig:one_shot_pruning} and \ref{fig:one_shot_pruning_resnet}.

M-FAC Pruner proposed in \citep{M-FAC} is a pruner leveraging second order information 
that doesn't require an explicit construction of Fisher Inverse matrix. Therefore, unlike
WoodFisher and CAP that require $O(Bd)$ memory computation
and storage cost of this method is constant with respect to the block size and one can take into 
account correlations between larger groups of weights for free. Following the original paper we chose block size of
2k as the best performing one. However, one can see from Figures \ref{fig:one_shot_pruning} and
\ref{fig:one_shot_pruning_resnet} that smaller block sizes turn out to perform better. 
A possible explanation of this phenomenon is that the Fisher Inverse estimate becomes too noisy
and unstable for large blocks. 

\section{Execution latency.}
\label{app:latency}

In addition to the plot throughput vs accuracy  shown in the main part 
we present in this section execution latency per sample vs latency when running models on the
DeepSparse engine. The results are presented on Figure \ref{fig:latencies}.

\begin{figure}[!h]
    \centering
    \begin{subfigure}{0.6\linewidth}
        \includegraphics[width=\linewidth]{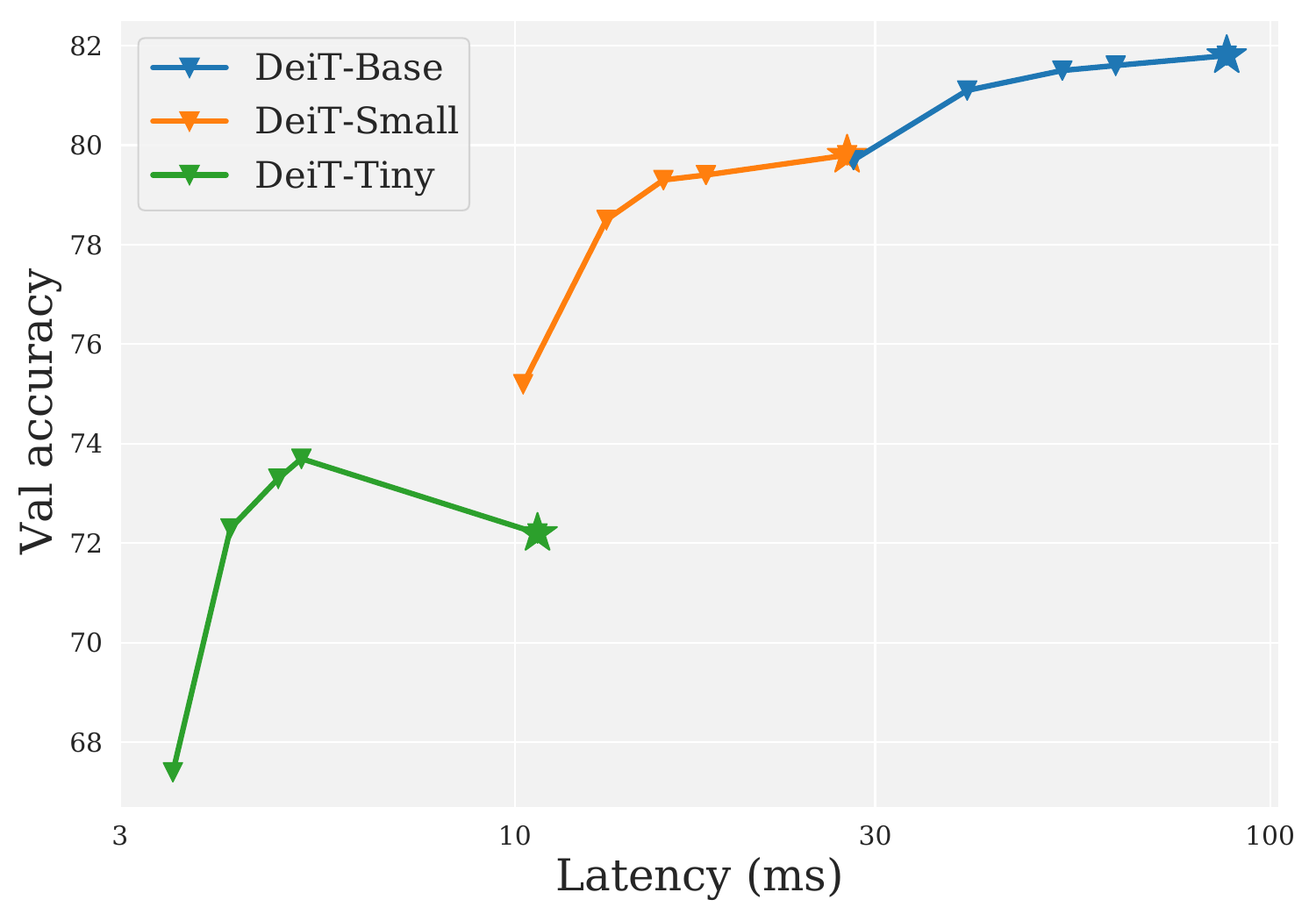}
    \end{subfigure}
    \caption{Accuracy vs latency on ImageNet-1k.}
    \label{fig:latencies}
\end{figure}

\section{Comparison with AC/DC training}
\label{appendix:ac_dc}

In addition to the sparse training from scratch with periodic updates of the sparsity
weights with some saliency criterion for weight elimination and regrowth \citep{evci2020rigging} one
can consider alternating compressed/decompressed training (AC/DC), proposed in \citep{AC-DC}.
Namely one switches between dense stages with standard unconstrained training of the model,
and sparse stages when the model is pruned to the target sparsity level and trained with the frozen sparsity mask
until the beginning of the next dense stage, when the sparsity mask is removed. This procedure 
produces both accurate dense and sparse models. 

Following the original paper we use magnitude pruning 
as a saliency criterion for weight pruning. The augmentation and regularization pipeline follows the settings from \cite{DeiT}. 
All models with AC/DC were trained for 600 epochs in total with first pruning step at epoch 150 followed by 7 
sparse stages 25 epochs long each, and 6 dense stages of the same length. The last dense stage lasts 50 epochs
and the last sparse is 75 epochs long. Learning rate is gradually decayed from $\eta_{\text{max}} = 5 
\cdot 10^{-4}$ to $\eta_{\text{min}} = 10^{-6}$ with cosine annealing. Initial warm-up phase with
linearly increasing learning rate is $20$ epochs. We compare AC/DC with CAP models 
finetuned for additional $100$ epochs. 

\begin{table}[!htb]
  \caption{AC/DC vs CAP (finetuned for additional 100 epochs) on ImageNet-1k.}
  \label{tab:ovit_vs_ac_dc}
  \centering
  \small
  \begin{tabular}{cccc}
  \toprule
  Model & Method & Sparsity (\%) & Top1-Accuracy (\%) \\
  \midrule 
  \multirow{7}{*}{DeiT-Small} & CAP & \multirow{2}{*}{60} & 79.9 \\
  & AC/DC & & \textbf{80.4} \\
  \cmidrule{2-4}
  & CAP & \multirow{2}{*}{75} & \textbf{79.0} \\
  & AC/DC & & \textbf{79.0} \\
  \cmidrule{2-4}
  & CAP & \multirow{2}{*}{90} & \textbf{75.8} \\
  & AC/DC & & 72.0 \\
  \bottomrule
  \end{tabular}
\end{table}

One can observe that at low sparsity AC/DC achieves higher accuracy for the same sparsity target (even outperforming the dense baseline by 0.6\%), whereas for 75\% performance of both methods is equal, and CAP outperforms AC/DC at higher sparsity. However, one should note, that CAP uses computational budget (including the training of original model) of 440 epochs for 60\% sparsity, 520\% for 75\% and 700\% for 90\% vs 600 epochs used in AC/DC. 

\section{One-shot pruning of DETR}
\label{app:detr_pruning}

The approach presented in the paper is not limited to the image classification task, but can 
be applied to other computer vision tasks, such as object detection. We chose
the DeTR model \citep{DeTR} with ResNet50 backbone and ran one-shot pruning 
procedure with global magnitude, WoodFisher and CAP pruner. 
Specifically, we pruned all convolutional layers in the CNN backbone expect the first one 
and all linear projections in transformer encoder and decoder blocks while keeping the detection heads 
dense. The results are presented in Table~\ref{tab:detr_one_shot}. 
Following the standard protocol we used bbox mAP for evaluation. One can observe, that difference 
between the second order methods and magnitude pruning is very pronounced even for relatively small sparsity 
of 50\%, and CAP outperforms WF pruner. 

\begin{table}[!ht]
  \caption{One-shot pruning of DeTR.}
  \label{tab:detr_one_shot}
  \centering
  \small
  \begin{tabular}{cccc}
  \toprule
  Model & Method & Sparsity (\%) & bbox mAP \\
  \midrule 
  \multirow{5}{*}{DeTR} & Dense & 0 & 0.42 \\
  \cmidrule{2-4}
  & GM & \multirow{3}{*}{50} & 0.16 \\
  & WF & & 0.36 \\
  & CAP & & \textbf{0.38} \\
  \bottomrule
  \end{tabular}
\end{table} 

\newpage

\section{Proof of Theorem 1}
\label{appendix:theorem}

\addtocounter{theorem}{-1}

\begin{theorem}
Let $\mathcal{S}$ be a set of samples, and let $\nabla_{\ell_1}(\mathbf{w^*}), \dots, \nabla_{\ell_m}(\mathbf{w^*})$ be a set of gradients with $i \in \mathcal{S}$, with corresponding empirical Fisher matrix $\eF^{-1}(\ws)$. Assume a sparsification target of $k$ weights from $\mathbf{w^*}$. 
 Then, a sparse minimizer for the the constrained squared error problem
\begin{equation}
    \label{eq:squared-error}
    \text{min}_{\mathbf{w'}} \, \frac{1}{2m} \sum_{i = 1}^m \Big(\nabla_{\ell_i}(\mathbf{w^*})^\top \mathbf{w'} - \nabla_{\ell_i}(\mathbf{w^*})^\top \mathbf{w^*}\Big)^2 \,\, \text{s.t.} \,\, \mathbf{w'} \, \text{has at least} \, k \, \text{zeros}, 
\end{equation}
is also a solution   to the problem of minimizing the Fisher-based group-OBS metric
\begin{equation}
    \label{eq:group-fisher-theorem}
    \text{argmin}_{Q, |Q| = k} \,\, \frac{1}{2} \cdot \mathbf{w^*_Q}^\top \Big(\eF^{-1}(\ws)_{[Q,Q]}\Big)^{-1} \mathbf{w^*_Q}.
\end{equation}

\end{theorem}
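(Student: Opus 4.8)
The plan is to recognize the squared-error objective as a quadratic form in the \emph{weight perturbation} and then match it, support set by support set, to the group-OBS metric. First I would substitute $\dw = \mathbf{w'} - \ws$ and expand the objective of Equation~\ref{eq:squared-error} as
\[
\frac{1}{2m}\sum_{i=1}^m \pr{\nabla_{\ell_i}(\ws)^\top \mathbf{w'} - \nabla_{\ell_i}(\ws)^\top \ws}^2 = \frac{1}{2}\dw^\top \pr{\frac{1}{m}\sum_{i=1}^m \nabla_{\ell_i}(\ws)\,\nabla_{\ell_i}(\ws)^\top} \dw,
\]
so that the data-dependent matrix in parentheses is exactly the (un-dampened) empirical Fisher $\eF(\ws)$. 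The stabilizing term $\lambda\mathbf{I}$ is recovered by adjoining the regularizer $\tfrac{\lambda}{2}\|\dw\|^2$, a point I would flag but which leaves the argument unchanged. Under this substitution, requiring $\mathbf{w'}$ to have at least $k$ zeros is equivalent to requiring the existence of an index set $Q$ on which $\dw$ is pinned to $-\ws_Q$, i.e.\ $\mathbf{E}_Q\dw + \mathbf{E}_Q\ws = \mathbf{0}$.

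Next I would solve the inner problem for a \emph{fixed} support set $Q$: minimize $\tfrac{1}{2}\dw^\top \eF(\ws)\dw$ subject to $\mathbf{E}_Q\dw = -\ws_Q$, which is the exact optimization underlying Equation~\ref{eqn:multi-weight-obs}. Introducing a Lagrange multiplier $\mu$ gives the stationarity condition $\eF(\ws)\dw + \mathbf{E}_Q^\top\mu = \mathbf{0}$, hence $\dw = -\eF^{-1}(\ws)\mathbf{E}_Q^\top\mu$; substituting into the constraint and using the submatrix identity $\mathbf{E}_Q\eF^{-1}(\ws)\mathbf{E}_Q^\top = \eF^{-1}(\ws)_{[Q,Q]}$ yields $\mu = \pr{\eF^{-1}(\ws)_{[Q,Q]}}^{-1}\ws_Q$. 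Back-substitution recovers precisely the OBS update $\dw^*_Q$ of Equation~\ref{eqn:multi-weight-obs}, and the attained objective value collapses to
\[
V(Q) := \tfrac{1}{2}\,\mathbf{w}^{*\top}_Q \pr{\eF^{-1}(\ws)_{[Q,Q]}}^{-1}\mathbf{w}^*_Q = \rho_Q.
\]
Thus, for every fixed support set, the minimal squared error equals the group-OBS saliency $\rho_Q$.

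Finally I would remove the dependence on a fixed $Q$. A vector has at least $k$ zeros if and only if its zero pattern contains some size-$k$ index set, so the feasible region decomposes as the union $\bigcup_{|Q|=k}\{\dw : \mathbf{E}_Q\dw = -\ws_Q\}$. Minimizing over this union is the same as minimizing $V(Q)$ over the choice of $Q$, so the optimum of Equation~\ref{eq:squared-error} equals $\min_{|Q|=k} V(Q) = \min_{|Q|=k}\rho_Q$, and any optimal support $Q^*$ is exactly a minimizer of the group-OBS metric in Equation~\ref{eq:group-fisher-theorem}. A sparse solution $\mathbf{w'}$ of the regression problem is then realized by zeroing the coordinates in such a $Q^*$ and applying the corresponding OBS update to the rest, so its zero set contains a group-OBS-optimal $Q^*$, which is the claimed equivalence.

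I expect the crux to be the second step: verifying that the constrained quadratic minimum collapses \emph{exactly} to $\rho_Q$, and not merely up to an uncontrolled constant. This hinges on the submatrix-extraction identity for $\eF^{-1}(\ws)$ together with careful bookkeeping of the multiplier $\mu$ when it is reinserted into the objective. A secondary technical point is that the bare outer-product matrix $\tfrac{1}{m}\sum_i \nabla_{\ell_i}(\ws)\nabla_{\ell_i}(\ws)^\top$ is singular whenever $m < d$, so $\eF^{-1}(\ws)$ is only well-defined once the dampening $\lambda\mathbf{I}$ is reinstated; I would record this as the reason the stabilized Fisher, rather than the raw gradient outer product, is the matrix that appears in the group-OBS metric.
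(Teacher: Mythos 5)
Your proposal is correct and follows the same core route as the paper's proof: both identify the squared-error objective as exactly the quadratic form $\tfrac{1}{2}\,\dw^\top \eF(\ws)\,\dw$ in the perturbation $\dw = \mathbf{w'}-\ws$ (the paper via an exact second-order Taylor expansion, you by direct algebraic expansion --- equivalent since the objective is quadratic), and then reduce the $k$-sparse minimization to the constrained group-OBS problem over supports $Q$. The only difference is one of completeness: the paper outsources the step from the constrained quadratic to the saliency $\rho_Q$ by citing the WoodFisher/oBERT derivations, whereas you carry it out explicitly with Lagrange multipliers and verify that the attained value collapses exactly to $\rho_Q$; you also correctly flag the dampening term $\lambda\mathbf{I}$ that the paper's argument silently absorbs into $\eF(\ws)$.
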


\begin{proof}
We start by examining the unconstrained squared error function  
in Equation (\ref{eq:squared-error}), which we denote by $\mathcal{G}$. Clearly, the function $\mathcal{G}$ is a $d$-dimensional quadratic in the variable $\mathbf{w'}$, and has a minimum at $\mathbf{w^*}$. 
Next, let us examine $\mathcal{G}$'s second-order Taylor approximation around $\mathbf{w^*}$, given by
\begin{equation}
   (\mathbf{w'} - \mathbf{w^*})^\top \Big( \frac{1}{m} \sum_{i = 1}^m \nabla_{\ell_i}(\mathbf{w^*})^\top \nabla_{\ell_i}(\mathbf{w^*})\Big) (\mathbf{w'} - \mathbf{w^*}) ,
\end{equation}
where we used the fact that $\mathbf{w^*}$ is a minimum of the squared error, and thus the function has 0 gradient at it. However, by the definition of the empirical Fisher, this is exactly equal to 

\begin{equation} 
(\mathbf{w'} - \mathbf{w^*})^\top \eF(\ws) (\mathbf{w'} - \mathbf{w^*}).
\end{equation}

The Taylor approximation is exact, as the original function is a quadratic, and so the two functions are equivalent. 
Hence, we have obtained the fact that, under the empirical Fisher approximation, a $k$-sparse solution minimizing Equation~\ref{eq:squared-error} will also be a $k$-sparse solution minimizing Equation~\ref{eq:basic-obs}.  
However, the question of finding a $k$-sparse solution minimizing Equation~\ref{eq:basic-obs} is precisely the starting point of the standard OBS derivations (see e.g. \citep{singh2020woodfisher} or \citep{kurtic2022optimal}), which eventually lead to the formula in  Equation~(\ref{eq:group-fisher-theorem}). This concludes the proof.
\end{proof}

\section{Augmentation choice for Empirical Fisher}\label{app:augmentation_for_fisher}

We compared the performance of CAP with Empirical Fisher computed on image-label pairs
where the validation transforms were applied to images (i.e center crop with resize)
and the same set of augmentations used for training and finetuning (RandAugment transforms, Label smoothing, e.t.c.).
We observed that the sparsity solution obtained without augmenting samples for Empirical Fisher estimate
turns out to be strongly overfitting. We point out that in both cases we use the same population size for 
Empirical Fisher. 

Figure \ref{fig:aug_for_fisher} illustrates this result: using validation augmentation (red) yields better training loss but degenerates in terms of validation accuracy. A possible explanation is that 
CAP chooses an overfitting solution which the model is unable to escape during finetuning. 

\begin{figure}[!htb]
    \centering
    \begin{subfigure}{0.4\linewidth}
        \includegraphics[width=\linewidth]{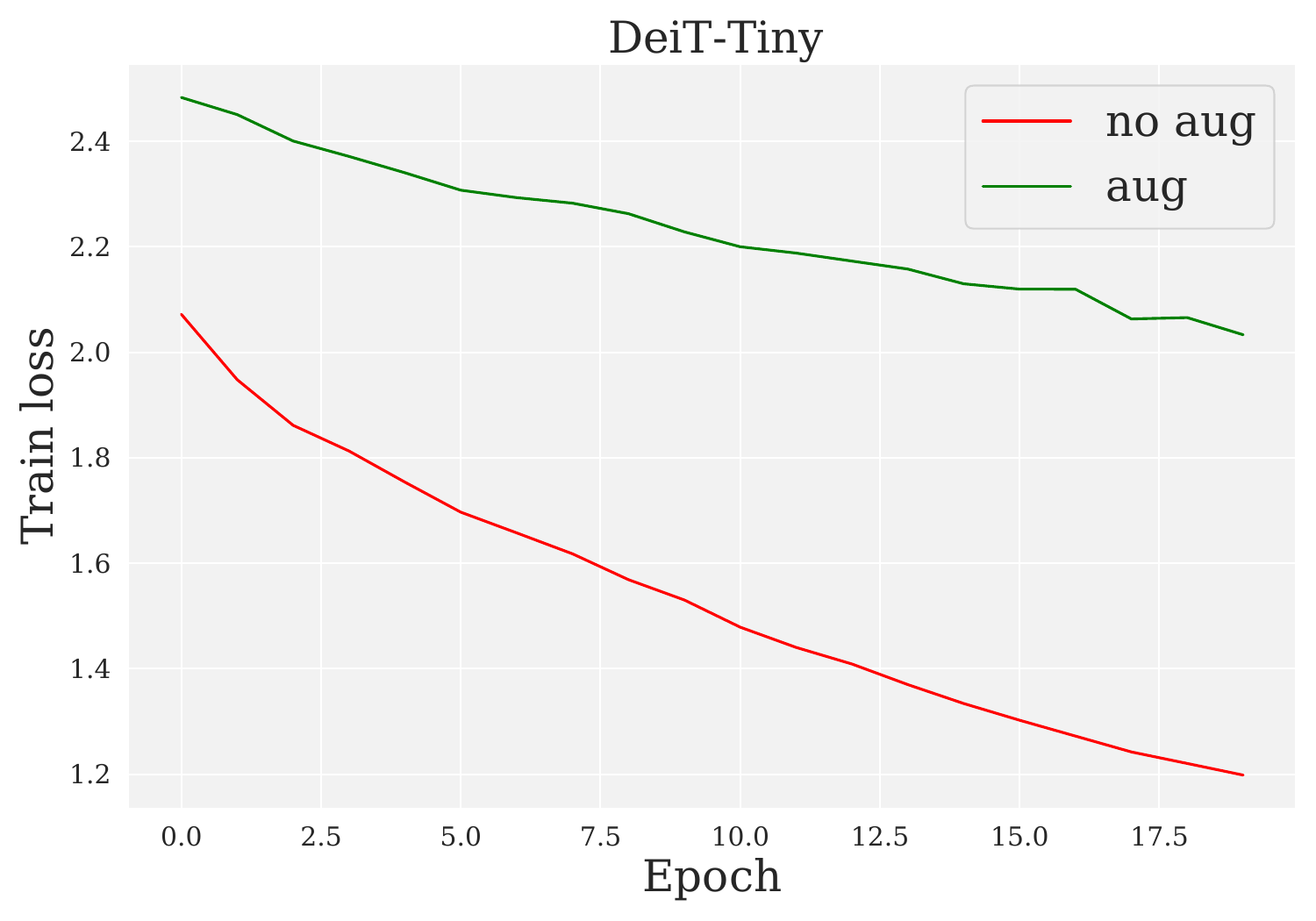}
    \end{subfigure}
    \begin{subfigure}{0.4\linewidth}
        \includegraphics[width=\linewidth]{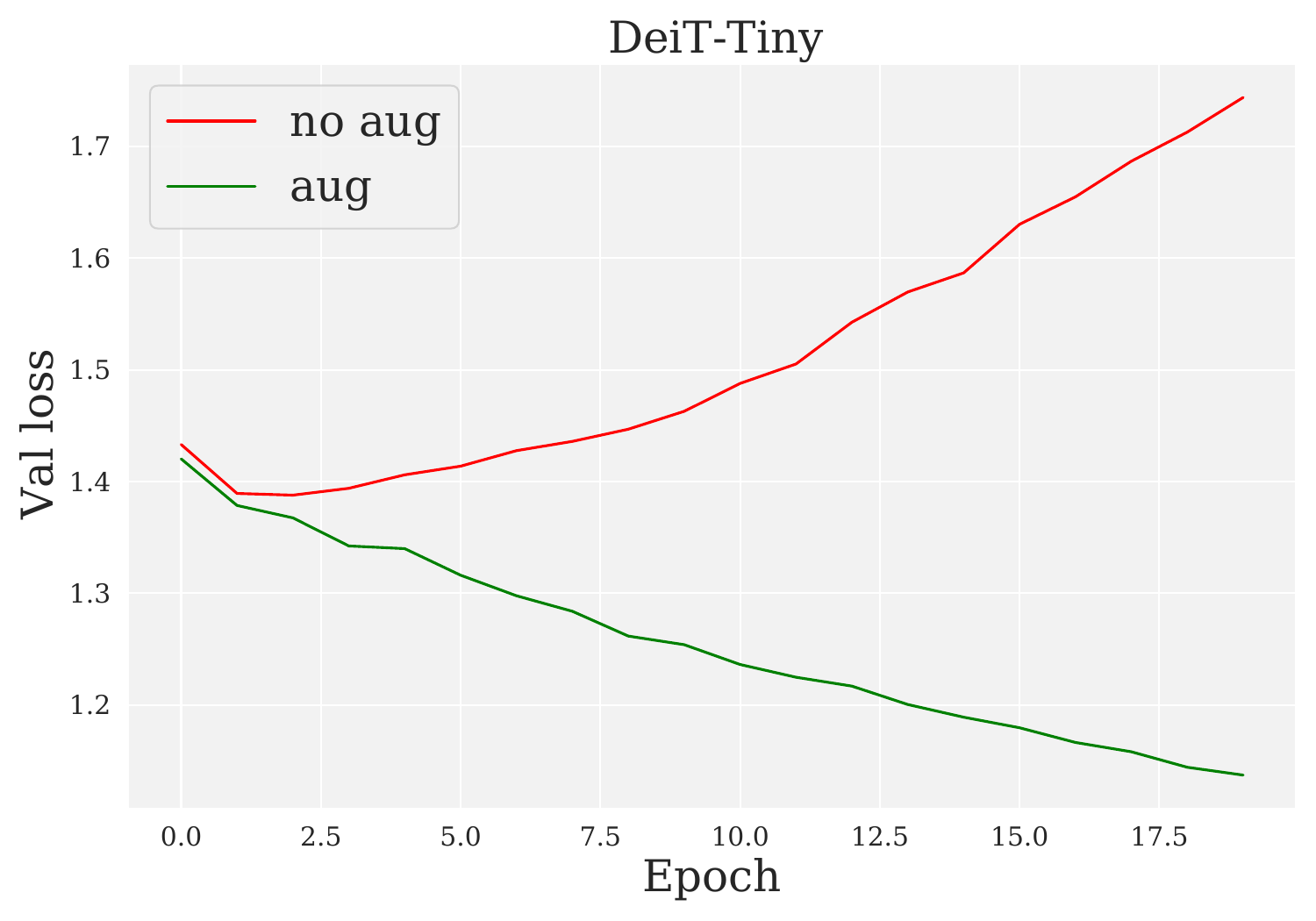}
    \end{subfigure}
    \caption{Training (\textbf{left}) and validation loss (\textbf{right}) for one-shot + finetuning.}
    \label{fig:aug_for_fisher}
    \vspace{-1em}
\end{figure}

\section{Fisher matrix structure}
\label{app:fisher_matrix_structure}

In order to validate the necessity of taking into account the correlations between weights, one has to make sure that the 
empirical Fisher matrix used as proxy for Hessian is non-diagonal. We have visualized an average block of empirical Fisher for a particular layer 
on Figure \ref{fig:fisher_matrix_block_structure} from DeiT-Tiny and ConvNext-Small models. One can see, Fisher matrix exhibits 
a pronounced non-diagonal structure, which justifies the need of a careful and thorough treatment of weight correlations.

\begin{figure}[t]
    \centering
    \begin{subfigure}{0.4\linewidth}
        \centering
        \includegraphics[width=\linewidth]{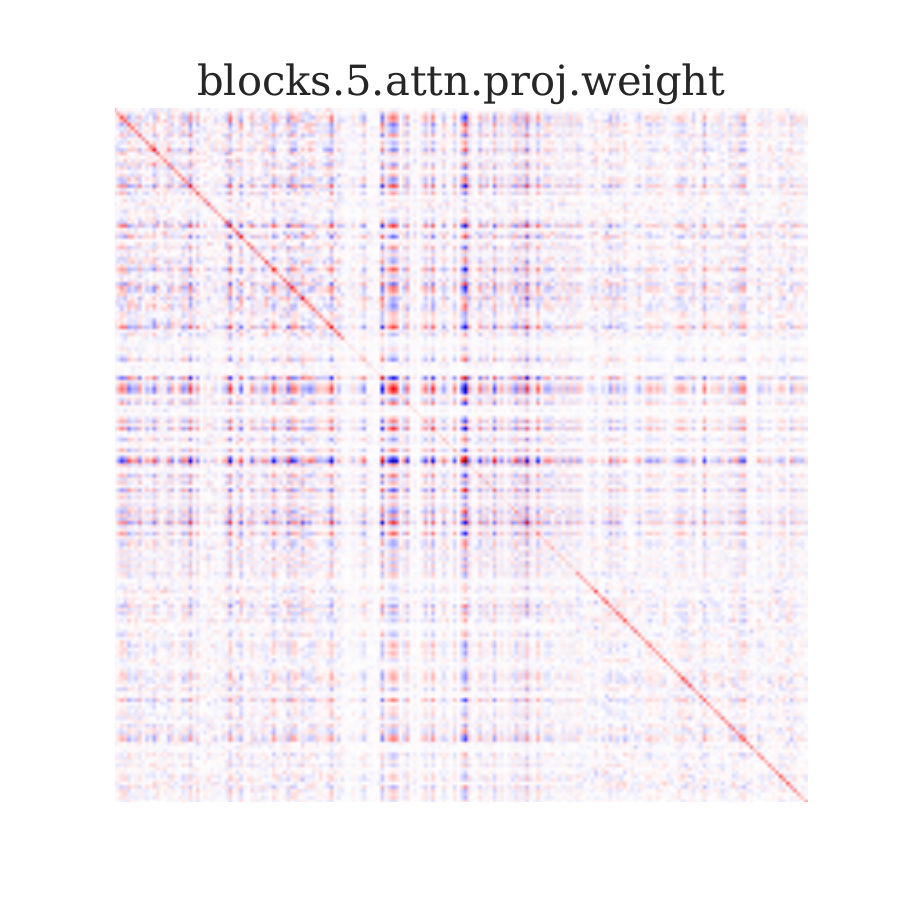}
    \end{subfigure}
    \begin{subfigure}{0.4\linewidth}
        \centering
        \includegraphics[width=\linewidth]{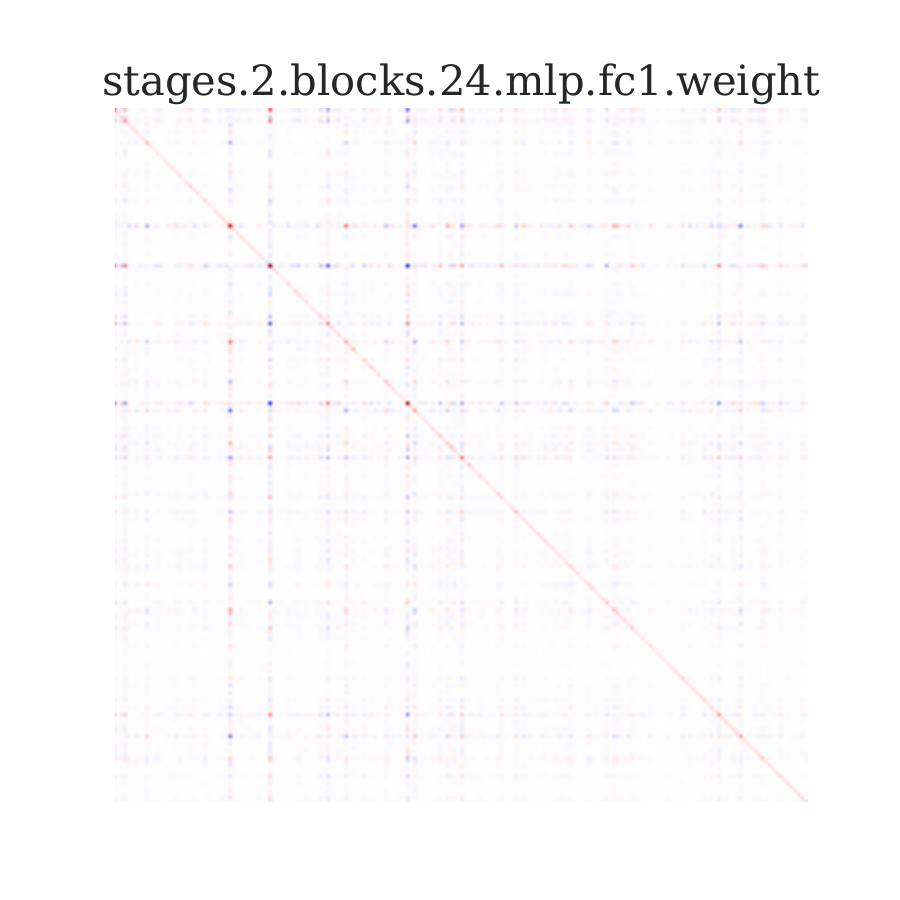}
    \end{subfigure}
    \caption{
        \textbf{Left}: empirical Fisher block for a weight from DeiT-Tiny.
        \textbf{Right}: empirical  Fisher block for a weight from ConvNext-Small.
    }
    \label{fig:fisher_matrix_block_structure}
\end{figure}

\end{document}